\documentclass[11pt]{article}
\usepackage[utf8]{inputenc}
\usepackage[margin=1in]{geometry}

\usepackage[utf8]{inputenc} %
\usepackage[T1]{fontenc}    %
\usepackage{hyperref}       %
\usepackage{url}            %
\usepackage{booktabs}       %
\usepackage{amsfonts}       %
\usepackage{nicefrac}       %
\usepackage{microtype} %
\usepackage{natbib}
\usepackage{xcolor}         %
\bibliographystyle{abbrvnat}
\usepackage{amsfonts}
\usepackage{amsthm}
\usepackage{algorithm,algorithmic}
\usepackage{amsmath}
\usepackage{amssymb}
\usepackage[vvarbb]{newtxmath}
\usepackage{mathtools}

\usepackage{etoolbox}
\usepackage{amssymb}

\usepackage{nicematrix}
\NiceMatrixOptions{left-margin,right-margin}

\usepackage{verbatim}
\usepackage{tabto}

\usepackage{tkdefs}

\usepackage{hyperref}
\usepackage{enumitem}
\usepackage{thmtools}
\usepackage{fancyhdr}
\usepackage{xspace}
\usepackage{xcolor}
\usepackage{xparse}
\newcommand{\spike}[2]%
{\bgroup
  \sbox0{#2}%
  \rlap{\usebox0}%
  \hspace{0.5\wd0}%
  \makebox[0pt][c]{\rule[\dimexpr \ht0+1pt]{0.5pt}{#1}}%
  \makebox[0pt][c]{\rule[\dimexpr -\dp0-#1-1pt]{0.5pt}{#1}}%
  \hspace{0.5\wd0}%
\egroup}
\usepackage[font=small]{caption}

\usepackage{booktabs}
\usepackage{array}
\usepackage{arydshln}
\setlength\dashlinedash{0.2pt}
\setlength\dashlinegap{1.5pt}
\setlength\arrayrulewidth{0.3pt}

\usepackage[extdef=true]{delimset}
\usepackage[capitalize]{cleveref}
\RequirePackage{amsthm}
\RequirePackage{thmtools}
\RequirePackage[capitalize]{cleveref}

\RequirePackage{crossreftools}

\newcommand{\D}{\mathcal{D}}

\newcommand{\predclass}{\F_{\rowsnum,\colsnum}^{\ell,W_0}}

\newcommand{\F}{\mathcal{F}}
\newcommand{\Z}{\mathcal{Z}}
\newcommand{\N}{\mathbb{N}}

\renewcommand{\epsilon}{\varepsilon}

\newcommand{\vecentry}[2]{#1[#2]}
\newcommand{\vecentries}[3]{\vecentry{#1}{#2:#3}}

\newcommand{\erm}{\widehat{w}_*}

\pdfstringdefDisableCommands{%
	\let\Cref\crtCref
	\let\cref\crtcref
}

\newcommand{\vecpart}[2]{{#1}^{(#2)}}

\newcommand{\opt}{w_*}
\newcommand{\R}{\mathbb{R}}
\newcommand{\E}{\mathbb{E}}
\newcommand{\convdim}{d}
\newcommand{\rowsnum}{k}
\newcommand{\numdata}{n}
\newcommand{\colsnum}{m}
\newcommand{\unitball}[1]{\mathbb{B}^{#1}}
\newcommand{\frobball}[2]{\mathbb{B}^{{#1}{\times}{#2}}}
\newcommand{\unitballconv}{\unitball{\convdim}}
\newcommand{\unitballrows}{\unitball{\rowsnum}}
\newcommand{\unitballcols}{\unitball{\colsnum}}
\newcommand{\unitballrowscols}{\frobball{\rowsnum}{\colsnum}_{\initmatrix}}
\newcommand{\frobnorm}[1]{\|#1\|_F}
\newcommand{\optmat}{W^*}
\newcommand{\ermmat}{\widehat{W}_*}
\newcommand{\initmatrix}{W_{0}}

\declaretheoremstyle[
	    spaceabove=\topsep, 
	    spacebelow=\topsep, 
	    headfont=\normalfont\bfseries,
	    bodyfont=\normalfont\itshape,
	    notefont=\normalfont\bfseries,
	    notebraces={(}{)},
	    postheadspace=0.33em, 
	    headpunct={.},
    ]{theorem}
\declaretheorem[style=theorem]{theorem}

\declaretheoremstyle[
	    spaceabove=\topsep, 
	    spacebelow=\topsep, 
	    headfont=\normalfont\bfseries,
	    bodyfont=\normalfont,
	    notefont=\normalfont\bfseries,
	    notebraces={(}{)},
	    postheadspace=0.33em, 
	    headpunct={.},
    ]{definition}

\declaretheoremstyle[
        spaceabove=\topsep, 
        spacebelow=\topsep, 
        headfont=\normalfont\bfseries,
        bodyfont=\normalfont,
        notefont=\normalfont\bfseries,
        notebraces={}{},
        postheadspace=0.33em, 
        qed=$\blacksquare$, 
        headpunct={.},
    ]{proofstyle}
\declaretheorem[style=proofstyle,numbered=no,name=Proof]{proof}

\declaretheorem[style=theorem,name=Lemma]{lemma}

\declaretheorem[style=theorem,numbered=no,name=Lemma]{lemma*}
\declaretheorem[style=theorem,numbered=no,name=Corollary]{corollary*}
\declaretheorem[style=theorem,numbered=no,name=Proposition]{proposition*}
\declaretheorem[style=theorem,numbered=no,name=Claim]{claim*}
\declaretheorem[style=theorem,numbered=no,name=Fact]{fact*}
\declaretheorem[style=theorem,numbered=no,name=Observation]{observation*}
\declaretheorem[style=theorem,numbered=no,name=Conjecture]{conjecture*}

\declaretheorem[style=definition,numbered=no,name=Definition]{definition*}
\declaretheorem[style=definition,numbered=no,name=Assumption]{assumption*}
\declaretheorem[style=definition,numbered=no,name=Remark]{remark*}
\declaretheorem[style=definition,numbered=no,name=Example]{example*}
\declaretheorem[style=definition,numbered=no,name=Question]{question*}

\title{Complexity of Vector-valued Prediction:
From Linear Models to Stochastic Convex Optimization}

\author{%
    Matan Schliserman\thanks{Blavatnik School of Computer Science, Tel Aviv University; \texttt{schliserman@mail.tau.ac.il}.}
    \and%
    Tomer Koren\thanks{Blavatnik School of Computer Science, Tel Aviv University, and Google Research; \texttt{tkoren@tauex.tau.ac.il}.}
}

\begin{document}
\maketitle

\begin{abstract}
We study the problem of learning vector-valued linear predictors: these are prediction rules parameterized by a matrix that maps an $\colsnum$-dimensional feature vector to a $\rowsnum$-dimensional target. 
We focus on the fundamental case with a convex and Lipschitz loss function, and show several new theoretical results that shed light on the complexity of this problem and its connection to related learning models. 
First, we give a tight characterization of the sample complexity of Empirical Risk Minimization (ERM) in this setting, establishing that $\smash{\widetilde{\Omega}}(\rowsnum/\epsilon^2)$ examples are necessary for ERM to reach $\epsilon$ excess (population) risk; this provides for an exponential improvement over recent results by \cite{magen2023initialization} in terms of the dependence on the target dimension $\rowsnum$, and matches a classical upper bound due to \citet{maurer2016vector}.
Second, we present a black-box conversion from general $\convdim$-dimensional Stochastic Convex Optimization (SCO) to vector-valued linear prediction, showing that any SCO problem can be embedded as a prediction problem with $\rowsnum=\Theta(\convdim)$ outputs.
These results portray the setting of vector-valued linear prediction as bridging between two extensively studied yet disparate learning models: linear models (corresponds to $\rowsnum=1$) and general $\convdim$-dimensional SCO (with $\rowsnum=\Theta(\convdim)$).

\end{abstract}

\section{Introduction}

Prediction problems, such as classification and regression, lie at the core of both practical applications and theoretical research in machine learning. Within this framework, learning vector-valued predictors~(VVPs), characterized by functions of the form:
\[
x~\rightarrow~\ell(Wx)~,
\]
mapping vectors $x \in \R^\colsnum$ through a linear transformation parameterized by a matrix $W\in \R^{\rowsnum\times\colsnum}$ followed by a loss function $\ell : \R^k \mapsto \R$, constitutes a rich learning framework that captures a wide range of problems in machine learning, from classical to modern.
For instance, the scenario where $\rowsnum=1$ corresponds to the extensively studied domain of generalized linear models~\cite[e.g.,][]{bartlett2002rademacher,DBLP:books/daglib/shalev-shwartz2014}. 
When $\rowsnum > 1$, this setting encompasses multi-class problems, where $W$ acts as a matrix of predictors and $\ell$ corresponds to a specific loss function, such as the cross-entropy loss or the multiclass hinge loss~\citep{crammer2001algorithmic,mohri2018foundations}. 

Another example of VVPs arises in feed-forward neural networks, where a composition of such transformations occurs, each of which corresponds to a layer with a weight matrix $W$ and an activation function $\ell$.
Motivated by this connection, a recent line of work studied VVP in the regime where $\ell$ is Lipschitz continuous and $W$ is constrained within a unit ball, relative to some matrix norm $\|\cdot\|$, centered around an ``initialization'', or reference matrix $W_0$ \citep{daniely2019generalization,DBLP:journals/dg-two-layer-networks,DBLP:journals/corr/vardi-shamir-srebro,magen2023initialization}. 
These studies have yielded a range of sample complexity results depending on the particular choice of a matrix norm and properties of the initialization $W_0$.

In this work, we discuss an arguably more basic and fundamental case of the VVP framework: where $\ell$ is a \emph{convex} (and Lipschitz) loss function and the domain is restricted to a simple unit ball, with respect to the Frobenius norm, centered around a given reference matrix $W_0$. In this scenario, recent work by \cite{magen2023initialization} reveals an interesting finding: while learning is possible within this framework using a specific algorithm (namely stochastic gradient descent, SGD), there exist problem instances where generic empirical risk minimization (ERM) fails. 
As they mention in their work, this finding is analogous to a series of studies within the more general context of Stochastic Convex Optimization (SCO), which established that learnability in SCO is algorithmic dependent and in general, learning through ERM could fail when the problem dimension is sufficiently large~\citep{shalev2010learnability,feldman2016generalization}.

\subsection{Our contributions}

In this work, we present several findings that contribute to a better understanding of the complexity of learning vector-valued predictors (VVPs) with convex loss functions and its connection to stochastic convex optimization (SCO). 
Our main contributions of this paper are summarized as follows:

\begin{enumerate}[label=(\roman*),leftmargin=!]
    \item
    We characterize the exact sample complexity of ERMs within the framework of convex and Lipschitz VVPs, demonstrating a lower bound of $\tOm\br{\ifrac{\rowsnum}{\epsilon^2}}$. Together with a classic result of \cite{maurer2016vector}, this implies that the sample complexity of ERM in the VVP setting is $\tTh\br{\ifrac{\rowsnum}{\epsilon^2}}$.
    In particular, our lower bound provides for an exponential improvement as compared to the lower bound of \cite{magen2023initialization}, that scaled poly-logarithmically with the target dimension $k$, and further includes the tight dependence on $\epsilon$.
    
    \item 
    We present a black-box transformation from general SCO to VVP, that converts a given SCO instance in $d$-dimensions to a convex VVP problem with $\rowsnum=\Theta(d)$ outputs. We show that using any algorithm for the VVP setting to solve the converted problem instance to within $\epsilon$ excess risk using a training sample of size $n$, we can directly recover a solution to the original SCO problem with excess risk $O(\epsilon + \ifrac{1}{\sqrt{n}})$.
\end{enumerate}

Put together, the two results indicate that, in terms of its complexity, VVP bridges between two extreme models: generalized linear models, namely the case $\rowsnum=1$, and general $\convdim$-dimensional SCO, that roughly correspond to $\rowsnum=\Theta(\convdim)$. 
First, our sample complexity bounds for ERM can be seen as interpolating between the classical $\Theta(\ifrac{1}{\epsilon^2})$ sample complexity rate of generalized linear models~\citep{bartlett2002rademacher} and the analogous bound in $d$-dimensional SCO, which is linear in $d$~\citep{feldman2016generalization,carmon2023sample}.
Second, from a more structural perspective, our transformation from SCO to VVP suggests that in the extreme where $k=\Theta(d)$, vector-valued prediction becomes rich enough so as to encompass generic SCO problems.

The revealed connection between linear models and SCO through the lens of VVP is perhaps somewhat surprising, since the two are extensively-studied problems that traditionally differ from one another in terms of techniques and results. Further, it partially addresses a common conceptual criticism of SCO as a learning framework: in SCO, there is no apparent concept of ``prediction'' and losses are rather implicitly assigned to model parameters, whereas VVP is naturally a supervised learning model that explicitly defines a rule $x \rightarrow Wx$ from which predictions are generated and losses are induced.

\subsection{Additional related work}
\label{sec:related-works}

\paragraph{Upper bounds for the sample complexity of VVPs.}
As alluded to in the introduction, closely related to our setting is the work of \cite{maurer2016vector} that gives an upper bound that scales like $O(\rowsnum)$, for convex and Lipschitz predictors with bounded Frobenius norm.
In another work, \cite{DBLP:journals/dg-two-layer-networks} achieved an upper bound that is similar to \cite{maurer2016vector} for non-convex predictors and with bounded difference from a reference matrix $W_0$.
For the specific case of $W_0=0$, \cite{DBLP:journals/corr/vardi-shamir-srebro} shows an upper bound that scales like $O(\log \rowsnum)$ and \cite{magen2023initialization} proved an upper bound independent on $\rowsnum$.
The works of \cite{lei2019data,zhang24multi} studied VVPs with arbitrary initialization under an $\ell_\infty$-Lipschitz condition, a stronger assumption compared to our setting, and showed upper bounds that are logarithmic in $\rowsnum$.
In addition, \cite{zhang24multi} state a similar upper bound in the $\ell_2$-Lipschitz case. In a personal communication with the authors of this paper, it was pointed out that the improved bound, which will appear in an upcoming journal version of their work, in fact holds under an additional smoothness assumption. This is compatible with our results, which indicate that some additional assumptions (though not necessarily smoothness) and potentially new techniques are indeed required to obtain such an improved dependence on $k$ in the standard $\ell_2$-Lipschitz case. Whether smoothness is particularly necessary for obtaining improved bounds in the $\ell_2$- Lipchitz case remains an interesting open problem for further investigation.

\paragraph{Lower bounds for the sample complexity of VVPs.}
For one-hidden-layer neural networks, which is a special case of valued predictors with non-convex loss, \cite{daniely2019generalization} provide a fat-shattering lower bound for the case where $\rowsnum=\colsnum$, crucially rely on the inputs have norm that scales with $\rowsnum$.
Then, \cite{DBLP:journals/corr/vardi-shamir-srebro} referred to the case $\rowsnum\neq \colsnum$ and showed a lower bound of $\Omega(\ifrac{\rowsnum}{\epsilon^2})$ for the sample complexity of a class of non-convex predictors, where the initialization matrix is $W_0=0$ and the $\ell_2$ norm of the prediction matrix is bounded by a constant. Then, \cite{DBLP:journals/dg-two-layer-networks} refer to the class of predictors with bounded Frobenius norm and showed that this class can shatter a training set of $\Omega(\rowsnum)$ examples, assuming that the inputs have norm $\sqrt{\colsnum}$ and that $\rowsnum=O(2^\colsnum)$.
In a recent work, \cite{magen2023initialization} generalized this bound to the case where the Frobenius norm of the distance from an arbitrary initialization matrix is bounded.
In the same work, they discussed convex vector valued predictors and showed a lower bound of $\Omega(\log \rowsnum)$ for the sample complexity of convex predictors.
In this work, we improve their lower bound for convex predictors as we achieve an exponential increase in the dependence in $\rowsnum$. 

\paragraph{Generalized Linear Models.}
In the landscape of learning theory literature, the Generalized Linear Models (GLM) framework stands as one of the most basic and extensively explored settings \citep[e.g.,][]{DBLP:books/daglib/shalev-shwartz2014}, as it captures some fundamental problems like logistic regression and support-vector machines.
In this setting, due to dimension-independent uniform convergence, it is guaranteed that constrained ERM learns with optimal sample complexity of $O(\ifrac{1}{\epsilon^2})$ examples \citep{bartlett2002rademacher}.
A more recent work by \cite{amir2022thinking} show that unregularized gradient methods, such as full-batch Gradient Descent achieve the same sample sample complexity when learning GLMs.

\paragraph{Stochastic Convex optimization.}
Stochastic convex optimization (SCO) is a fundamental theoretical framework widely used for studying common optimization algorithms. This is often justified by the simplicity of the framework and the possibility of a rigorous analysis that can hint at the pros and cons of various optimization techniques in practical setups arising in machine learning.  
The works of \cite{shalev2010learnability,feldman2016generalization,carmon2023sample} demonstrated that, although learnability in this setting is possible (e.g., by Stochastic Gradient Descent) ERM may not learn in this setting since uniform convergence does not generally hold.
Specifically, \citet{carmon2023sample} recently established that the sample complexity of ERM in $d$-dimensional SCO is $\Theta\br{ \ifrac{d}{\epsilon}+\ifrac{1}{\epsilon^2} }$. We note that since our lower bound requires that the number of columns of the vector-valued predictor matrix is $\colsnum=\Theta(n)$ and the total number of parameters is $\Omega(\colsnum n)$, this lower bound does not contradict their upper bound.
Beyond generic ERM, several specific and natural ERM algorithms, which are also frequently used in practice, such as full batch Gradient Descent have been shown to fail in learning this setting~\citep{amir2021sgd,schliserman2024dimension,livni2024sample}.

\section{Problem Setup and Basic Definitions}
\paragraph{Notations.}
For every vector $x\in \R^d$, we denote its $i$th entry by $\vecentry{x}{i}$ and the vector in $\R^{j-i+1}$ which is achieved by taking the entries with index $i\leq k\leq j$ 
by $\vecentries{x}{i}{j}$.
For every $n\in \N$, we denote $[n]=\{1,\ldots,n\}$.
We denote the Frobenius norm of a matrix $M$ by $\norm{M}_F = (\sum_{i,j} M_{i,j}^2)^{1/2}$, and denote a unit ball with respect to $\norm{\cdot}_F$ centered at $W_0$ by $\unitballrowscols$.
Moreover, for every dimension $d$, we denote the $d$-dimensional unit ball around the origin by $\unitballconv$, the $d$-dimensional standard basis by $\{e_1,\ldots,e_d\}$.

\paragraph{Vector-valued prediction.}
Our main setting of interest in this paper is vector-valued prediction with a convex and Lipschitz loss function. 
Let $\mathcal{D}$ be a distribution supported over vectors $x\in \R^\colsnum$ such that $\|x\| \leq 1$. Given a convex and $G$-Lipschitz loss function $\ell$ defined over the $\rowsnum$-dimensional unit ball $\unitballrows \subseteq \mathbb{R}^\rowsnum$, and an initialization matrix $\initmatrix$, the objective is to find a matrix $W \in \unitballrowscols$ with low population loss, defined as the expected value of the loss function over the distribution $\mathcal{D}$, namely
\begin{align*} 
    L(W) = \E_{x \sim \mathcal{D}}[\ell(Wx)].
\end{align*}
To find such a model $W$, the learner uses a set of $\numdata$ training examples $S = \{z_1, \ldots, z_\numdata\}$, drawn i.i.d.\ from the unknown distribution $\mathcal{D}$. Given the sample $S$, the corresponding \emph{empirical loss} (or \emph{risk}) of $W$, denoted $\widehat{L}(W)$, is defined as its average loss over samples in $S$:
\[
    \widehat{L}(W) = \frac{1}{\numdata}\sum_{i=1}^\numdata L(Wx_i).
\]
A population minimizer in this context is any $\optmat$ that minimizes the population risk, namely such that $\optmat \in \arg\min_{W\in \unitballrowscols} L(W)$,
and an empirical risk minimizer (ERM) is any $\ermmat$ that minimizes the empirical risk, namely such that $\ermmat \in \arg\min_{W\in \unitballrowscols} \widehat{L}(W)$.

\paragraph{Stochastic Convex Optimization.}

Another learning model we discuss is Stochastic Convex Optimization (SCO), which is a more general framework that includes (convex) vector-valued prediction as a special case.
In this problem, there is a population distribution $\mathcal{D}$ over an arbitrary instance set $Z$ and a loss function $f: \mathcal{W} \times Z \rightarrow \R$ which is convex and $G$-Lipschitz (for some $G>0$) with respect to its first argument over a domain $\mathcal{W}$.  
For simplicity, we fix in this paper the domain $\mathcal{W}$ to be the $\convdim$-dimensional unit ball around the origin, denoted $\unitballconv$.
Analogously to vector-valued prediction, the population loss with respect to $f$, denoted by $F$, is defined as, 
\begin{align*} 
    F(w) = \E_{z\sim \mathcal{D}}[f(w,z)].
\end{align*}
and the empirical loss, denoted by $\hat F$, is defined as, 
\[
    \widehat{F}(w) = \frac{1}{\numdata}\sum_{i=1}^\numdata f(w,z_i).
\]
and corresponding minimizers as
$\opt$ and  $\erm$, respectively.
\section{Sample Complexity of ERM in Convex Vector-Valued Prediction}
\label{sec_erm}

In this section, we present a tight characterization of the sample complexity of ERM in the setting of convex vector-valued prediction. This result is stated as follows.

\begin{theorem} \label{main_thm_erm_lower}
    Let $\rowsnum$,$\numdata\in \N$. 
    There exist $\colsnum=\Theta(\numdata)$, a reference matrix $W_0\in \R^{\rowsnum\times \colsnum}$, a convex and $1$-Lipschitz loss function $\ell\in \R^\rowsnum\to \R$ and a distribution $\D$ such that in the VVP parameterized by $W_0,\D$ and $\ell$, with constant probability over the choice of the training set $S\sim \D^n$, there exists an ERM $\ermmat$ with 
    $
        L(\ermmat)-L(\optmat)
        =
        \tOm\big( \sqrt{\rowsnum/\numdata} \big)
        .
    $
\end{theorem}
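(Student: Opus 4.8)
The plan is to prove \Cref{main_thm_erm_lower} by constructing an explicit hard instance, in the spirit of the dimension-dependent failures of ERM in stochastic convex optimization \citep{shalev2010learnability,feldman2016generalization}, but engineered so that the excess risk scales as $\sqrt{\rowsnum/\numdata}$ rather than $1/\sqrt{\numdata}$ — the latter being all that a scalar or ``one-hot'' encoding can deliver, since taking $\D$ uniform over the standard basis already forces $\E_x\|(\ermmat-\optmat)x\|\le\frobnorm{\ermmat-\optmat}/\sqrt{\colsnum}=O(1/\sqrt{\numdata})$. Meeting the tight vector-contraction upper bound of \citet{maurer2016vector} therefore requires a feature distribution $\D$ whose second-moment matrix has a large eigenvalue (of order $\rowsnum/\numdata$), hence small effective rank; concretely I would take $\D$ supported within a fixed subspace of dimension $r=\widetilde\Theta(\numdata/\rowsnum)$ inside $\R^\colsnum$ (with $\colsnum=\Theta(\numdata)$), e.g.\ uniform over unit vectors of that subspace, so that an $\numdata$-point sample spans the subspace yet its convex hull $\operatorname{conv}\{\pm x_1,\dots,\pm x_\numdata\}$ still misses a constant fraction of the unit sphere there. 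The other ingredients are a reference matrix $\initmatrix$ calibrated so that a population-optimal $\optmat\in\unitballrowscols$ exists with minimal $L(\optmat)$, and a convex, $1$-Lipschitz loss $\ell(v)=\operatorname{dist}(v,C)$ for a suitable convex set $C\subseteq\R^\rowsnum$ (morally a scaled $\ell_1$-ball/polytope), whose zero-set $C$ is rich enough that fitting the sample does not pin down $W$ on the sample-missed directions.

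From there I would proceed in three steps. First, verify the basic objects: $\ell$ is convex and $1$-Lipschitz, and $\optmat$ — the matrix sending every relevant feature direction into $C$ — lies in $\unitballrowscols$ with minimal population loss, fixing the benchmark $L(\optmat)$. Second, conditioning on the sample $S$, consider the problem of \emph{maximizing} the (convex) objective $L(W)$ over all empirical risk minimizers $W\in\unitballrowscols$ (here the empirical minimum equals $0$); its optimizer is itself a legitimate ERM, so the theorem reduces to lower-bounding that optimizer's population loss. Third — the crux — show that, with constant probability over $S$, one may take $\ermmat$ so that the induced seminorm $x\mapsto\|\ermmat x\|$ on the $r$-dimensional subspace has unit ball hugging $\operatorname{conv}\{\pm x_i\}$ (so every empirical constraint $\ermmat x_i\in C$ holds), while being small on the missed directions so that a fresh $x\sim\D$ lands \emph{outside} $C$ at distance $\widetilde\Theta(1/\sqrt r)=\widetilde\Theta(\sqrt{\rowsnum/\numdata})$, once the $\rowsnum$ output coordinates and the single Frobenius budget $\frobnorm{W-\initmatrix}\le1$ are accounted for.

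Two estimates carry this last step. A coverage lemma (Chernoff / net arguments over the effective feature subspace) shows that w.c.p.\ the sample-missed directions carry a constant fraction of the mass of $\D$; this, through the inradius of the random polytope $\operatorname{conv}\{\pm x_i\}$, is where the polylogarithmic factors hidden inside $\widetilde\Omega$ enter. And a budget-allocation computation shows that when the unit Frobenius weight is spread across $\Theta(\rowsnum)$ output coordinates — rather than concentrated on one coordinate, which would buy only a $1/\sqrt{\numdata}$ gap — the distance from $\ermmat x$ to $C$ on a fresh $x$ is indeed $\widetilde\Theta(\sqrt{\rowsnum/\numdata})$, matching Maurer's bound from below.

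The step I expect to be the main obstacle is exactly the simultaneous calibration of $C$ (equivalently $\ell$), $\initmatrix$, and the feature geometry so that: the population minimum is attained inside the unit Frobenius ball; the empirical-minimum set is still large enough to admit an adversarial ERM after $S$ is drawn; and that ERM's excess risk is $\widetilde\Theta(\sqrt{\rowsnum/\numdata})$ — neither smaller (as naive, low-rank, or one-hot constructions give, since $\|Wx\|\le1$ caps the per-example loss \emph{range} while $1$-Lipschitzness caps per-example loss \emph{changes} by the movement of $Wx$), nor larger than Maurer's upper bound permits. Making the factor $\sqrt{\rowsnum}$ genuinely emerge — from the way a single $\ell_2$ budget splits across many output coordinates at many sample-uncovered feature directions at once — is the technical heart of the argument.
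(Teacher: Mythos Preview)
Your proposal is an outline rather than a proof: the step you yourself flag as ``the main obstacle'' is not carried out, and there is a concrete tension your sketch does not resolve. With $\D$ uniform on the sphere of an $r$-dimensional subspace ($r=\widetilde\Theta(\numdata/\rowsnum)$, so $\numdata/r=\widetilde\Theta(\rowsnum)$), the symmetric hull $\operatorname{conv}\{\pm x_i\}$ of the sample has inradius $\widetilde\Theta(1/\sqrt r)$ with high probability; equivalently, for \emph{every} unit direction $v$ one has $\max_i|\langle v,x_i\rangle|\gtrsim 1/\sqrt r$ up to logarithms. Consequently, for any $A=\ermmat-\optmat$ with $\frobnorm{A}\le 2$, the worst sample displacement $\max_i\|Ax_i\|$ is at least (up to logs) the typical fresh displacement $\E_x\|Ax\|$ --- this holds for rank-one $A$ by the inradius bound and for general $A$ since $\max_i\|Ax_i\|^2\ge \tfrac1n\sum_i\|Ax_i\|^2\approx\E_x\|Ax\|^2$. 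So if $C$ is wide enough that the ERM constraints $\ermmat x_i\in C$ all hold, it is also wide enough that $\ermmat x\in C$ for typical fresh $x$, and the population gap collapses --- whether $C$ is an $\ell_2$-ball, an $\ell_1$-polytope, or a product body, and whether $A$ is rank-one or spread over all $\rowsnum$ outputs. Breaking this symmetry would require $\ell$ to distinguish images of sample points from images of fresh points, but $\ell$ and $C$ are fixed before the sample is drawn; your proposal provides no mechanism for this, and the heuristic ``spread the Frobenius budget across $\rowsnum$ coordinates'' does not by itself create one.

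The paper takes an entirely different, combinatorial route: it proves a fat-shattering lemma showing that $\predclass$ shatters $\Theta(\rowsnum/\epsilon^2)$ points at margin $\epsilon$, and then converts shattering to an ERM lower bound by the standard argument (label the unseen half of the shattered set adversarially). The engine of the construction is a packing $U\subset\R^{\Theta(\rowsnum)}$ of $2^{\Theta(\rowsnum)}$ unit vectors with pairwise inner products at most $\tfrac12$. The columns of $\initmatrix$ are scaled vectors from one such packing and encode each example's index; every labeling $y$ is realized by a perturbation $W_y$ (with $\frobnorm{W_y}\le 1$) whose extra columns come from a second such packing; and $\ell$ is a finite maximum of affine functions that, when fed $(\initmatrix+W_y)x_i$, decodes both the index $i$ and the bit $y_i$. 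The $\sqrt{\rowsnum}$ factor arises from information capacity --- a $\Theta(\rowsnum)$-dimensional output hosts $2^{\Theta(\rowsnum)}$ near-orthogonal directions, enough to index all labelings of a block of $\Theta(\rowsnum)$ examples, and $\Theta(1/\epsilon^2)$ such blocks fit under the unit Frobenius budget --- rather than from any random-polytope or covering computation.
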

We further show that this lower bound is tight up to logarithmic factors, using a vector-contraction inequality for Rademacher complexity due to \citet{maurer2016vector}, that implies an $O(k/\epsilon^2)$ sample complexity upper bound.  We defer details on this standard derivation to \cref{sec_erm_proofs}, and below focus on proving the lower bound in \cref{main_thm_erm_lower}.

For the case of $\rowsnum\leq O(\log \numdata)$, \cref{main_thm_erm_lower} follows from the well-known lower bound of $\Omega(1/\sqrt{n})$ for learning scalar-valued predictors with convex and $O(1)$-Lipschitz losses (for completeness, we provide a proof in \cref{sec_erm_proofs}).  Thus, henceforth we focus on the case where $\rowsnum\geq \Omega(\log \numdata)$.
Our proof approach in this case is to show that, for large enough column dimension $\colsnum$ and for a certain reference matrix $W_0$, the class of predictors parameterized by matrices in a unit Frobenius-norm ball centered at $W_0$ can shatter 
$\tOm\br{\rowsnum/\epsilon^2}$ examples with margin $\epsilon$.\footnote{A class of functions $\F$ on an input domain $\mathcal{X}$ \emph{shatters} $m$ points $x_1,...,x_m \in \mathcal{X}$ with margin $\epsilon$, if for all $y \in \{0,1\}^m$ we can find $f \in \F$ such that
for all $i \in [m]$, it holds $f(x_i) \leq -\epsilon \ \ \text{if} \ \ y_i = 0 \ \ \text{and} \ \ f(x_i)\geq \epsilon \ \ \text{if} \ \ y_i=1$.}
This is formalized in the following lemma.
\begin{lemma}
\label{lower_bound_shattering}
Let $10000\leq \rowsnum\in \N$ and $\frac{1}{12}\geq\epsilon\geq \sqrt{\rowsnum}2^{-\ifrac{\rowsnum}{312}}$.
There exists column dimension $\colsnum_0=\Theta(\rowsnum/\epsilon^2)$, such that for any $\colsnum\geq \colsnum_0$, there exist a  matrix $W_0\in \R^{\rowsnum \times \colsnum}$ and a loss function $\ell:\R^\rowsnum \to \R$, such that the class of vector-valued predictors %
\[
    \predclass
    := 
    \left\{x\rightarrow \ell(Wx) \;:\; W \in \R^{\rowsnum\times \colsnum}, \frobnorm{W-W_0}\leq 1\right\}
\]
can shatter $\Omega(\rowsnum/\epsilon^2)$ examples with margin $\epsilon$.
\end{lemma}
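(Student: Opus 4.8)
The plan is to build the instance by hand. Fix a small absolute constant $\rho>0$ and, by a standard probabilistic (union‑bound) argument, pick a family $u_1,\dots,u_N\in\R^{\rowsnum}$ of unit vectors with $|\langle u_a,u_b\rangle|\le\rho$ for all $a\ne b$ (e.g.\ rescaled i.i.d.\ $\pm1$ vectors); one can take $N=2^{\Theta(\rowsnum)}$ of them. The number of points to be shattered will be $M=\Theta(\rowsnum/\epsilon^2)$, and the construction needs $N\gtrsim M$, i.e.\ $\rowsnum/\epsilon^2\lesssim 2^{\Theta(\rowsnum)}$; this is exactly the hypothesis $\epsilon\ge\sqrt{\rowsnum}\,2^{-\rowsnum/312}$, with the constant $312$ absorbing the union bound. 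Take $\colsnum_0=\Theta(\rowsnum/\epsilon^2)$. The loss will be a shifted maximum of linear forms over this family, $\ell(w)=\max_{a\in[N]}\langle u_a,w\rangle-t$ (or a variant truncated from below so that $\ell$ is bounded), which is convex and $1$‑Lipschitz because $\|u_a\|\le1$; its sublevel set $\{\ell<\tau\}$ is the polytope $\bigcap_a\{\langle u_a,\cdot\rangle<\tau+t\}$, so ``$\ell(w)\ge\epsilon$'' means $w$ pokes past some facet hyperplane and ``$\ell(w)\le-\epsilon$'' means $w$ lies strictly inside all of them.

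The inputs are chosen with a ``shared direction plus individual direction'' structure, $x_j=\alpha\,u+\beta\,e_j$ with $u,e_1,\dots,e_M$ orthonormal in $\R^{\colsnum}$ and $\alpha^2+\beta^2=1$. Crucially there is no norm constraint on $W_0$, so $W_0$ may be chosen with large norm, with $W_0e_j$ a large multiple of $u_j$ and $W_0u=0$, which places each $W_0x_j$ right at (or near) the kink of $\ell$ where a prescribed set of facets becomes active. For a predictor $W=W_0+\Delta$ with $\frobnorm{\Delta}\le1$ one then has $Wx_j=W_0x_j+\alpha\,\Delta u+\beta\,\Delta e_j$, where $\Delta u$ is a common ``translation'' and the corrections $\Delta e_j$ satisfy $\sum_j\|\Delta e_j\|^2\le1$. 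Realizing a given $\{0,1\}^M$ pattern thus reduces to the combinatorial problem of choosing $\Delta$ in the unit Frobenius ball so that, point by point, the image $Wx_j$ is pushed past some facet of the arrangement $\{u_a\}$ (for the $1$‑labels) or kept strictly inside all relevant facets (for the $0$‑labels).

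Carrying out this last step against the Frobenius budget is the crux, and where the new idea is needed. The naive assignment — a private facet per point — fails: realizing a \emph{dense} pattern then forces $\Delta$ to have large component along $\Omega(M)$ mutually near‑orthogonal directions at once, and $\frobnorm{\Delta}\le1$ buys only $\Theta(1/\epsilon^2)$ such ``independent'' flips, far short of $M\asymp\rowsnum/\epsilon^2$ once $\rowsnum$ is large. The resolution is to let many inputs share a facet while remaining individually steerable: one groups the $M$ points in a layered fashion and chooses which subsets of the near‑orthonormal family each $W_0x_j$ sits against so that the budget to realize a pattern on a group is amortized over its members, with the $\rowsnum$ output coordinates providing the additional within‑group separation power that accounts for the factor $\rowsnum$. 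This is the heart of the argument, and it must also track the almost‑orthogonality error $\rho$ and the truncation of $\ell$ so that the $\pm\epsilon$ margins hold uniformly over all $2^{M}$ patterns; the logarithmic slack in these covering/union‑bound estimates is what downgrades the resulting ERM lower bound to a $\widetilde{\Omega}$.

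Finally I would record tightness, which also explains why the construction must be pushed to the edge: by Maurer's vector‑contraction inequality the empirical Rademacher complexity of a class of the form $\predclass$ on $n$ inputs of norm $\le1$ is $O(\sqrt{\rowsnum/n})$, and since $\epsilon$‑shattering $M$ points forces this quantity (on those $M$ points) to be $\gtrsim\epsilon$, no such class can $\epsilon$‑shatter more than $O(\rowsnum/\epsilon^2)$ points — so $M=\Theta(\rowsnum/\epsilon^2)$ is the right order.
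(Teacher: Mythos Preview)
You have the right scaffolding and correctly flag the crux: the Frobenius budget on $\Delta=W-W_0$ buys only $\Theta(1/\epsilon^2)$ independent $\epsilon$-flips, and getting to $\Theta(k/\epsilon^2)$ requires a new idea. But the paragraph you label ``the resolution'' is not a construction, and the concrete pieces you do commit to cannot be completed. Your input template $x_j=\alpha u+\beta e_j$ has a \emph{single} shared direction $u$; the common part $\alpha\,\Delta u$ is then one vector in $\R^k$, and the linear map $\Delta u\mapsto(\langle u_a,\Delta u\rangle)_{a\le M}$ has rank at most $k$, so it cannot realize $2^M$ sign patterns once $M\gg k$, and the private parts $\beta\,\Delta e_j$ only add another $\Theta(1/\epsilon^2)$ flips. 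The paper instead partitions the $M=\Theta(k/\epsilon^2)$ points into $\Theta(1/\epsilon^2)$ groups of size $\Theta(k)$, giving each group its \emph{own} shared column: $x_{i,j}=\tfrac{1}{\sqrt2}\,e_{\text{index}(i,j)}+\tfrac{1}{\sqrt2}\,e_{\text{group}(j)}$. The perturbation $W_y$ then places, in each of the $\Theta(1/\epsilon^2)$ group-columns, a single unit vector from a \emph{second} near-orthogonal family of size $2^{\Theta(k)}$ that encodes the entire $\Theta(k)$-bit label block of that group; the Frobenius cost is $\Theta(1/\epsilon^2)$ columns of norm $\epsilon$, i.e.\ $O(1)$.

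Correspondingly, a bare $\ell(w)=\max_a\langle u_a,w\rangle-t$ is too weak. Once all points in a group share one label vector, the loss must implement the conjunction ``$Wx_{i,j}$ matches label-vector $\hat u_z$ \emph{and} matches index-vector $u_{(i,j)}$ \emph{and} bit $i$ of $z$ is $1$,'' so that the same shared vector produces $+\epsilon$ or $-\epsilon$ depending on that bit. The paper does this by splitting $\R^k$ into an index subspace (dimension $k/13$, populated by $W_0$) and a label subspace (dimension $12k/13$, populated by $W_y$), and taking $\ell$ to be a maximum, over pairs $(z,r)$ with $z[r]=1$, of a \emph{sum} of two truncated inner products---one against $\hat u_z$ on the label side, one against $u_{\text{index}(r,\cdot)}$ on the index side. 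A single linear facet cannot enforce this bit-coupling. So the missing ingredients are: (a) a shared column \emph{per group} rather than one globally, (b) encoding each $\Theta(k)$-bit label block by a single vector from a $2^{\Theta(k)}$ family living in its own output subspace, and (c) a compound loss implementing the conjunction. Your closing tightness remark via vector contraction is correct and matches the paper's upper bound.
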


\cref{lower_bound_shattering} implies \cref{main_thm_erm_lower} via standard arguments; we defer this proof to \cref{sec_erm_proofs} and below focus on proving the lemma, which forms our main contribution in this section.

Before we formally prove \cref{lower_bound_shattering}, let us first outline the main steps and challenges in constructing the lower bound instance.
Our general approach is analogous to the arguments of \cite{magen2023initialization}. They show that for every $n\in \N$ there exists a data set $\{x_1,\ldots,x_n\}$ and labeling $y\in \R^n$, there exists a matrix $W_y$
with $\rowsnum=\Omega(2^n)$ such that for every $i$, $\ell(W_yx_i)=\epsilon$ if $y_i=1$ and $\ell(W_yx_i)=-\epsilon$ if $y_i=0$.
Their approach is to use the exponentially-sized set $\{e_{i}\}_{i=1}^{2^{n}}$ of standard basis vectors and associate every possible labeling $y\in \{0,1\}^{n}$ with a vector $e_y$ in this set and a matrix $W_y$ with $n+1$ columns which its first $n$ columns are the identity matrix and its last column is $e_y$. 
Then, they used a convex loss function $\ell$ constructed such that the predictor $\hat{y}_i=W_yx_i$ output the prediction according to the corresponding label of $x_i=e_i$. 

Our main challenge, however, is to shatter a training set $\{x_1,\ldots,x_n\}$ using matrices $\{W_y\}_{y\in 2^n}$ with only $\rowsnum=\Theta(n)$ rows rather than $\rowsnum=O(2^n)$ rows, as in the construction above.
For this, we employ a construction of a set $U$ of approximately orthogonal vectors in $\R^{O(\rowsnum)}$ with size which is exponential in $\rowsnum$, adapted from \citet{feldman2016generalization}. 
(The specific construction appears in \cref{lem:set_direc_exists} in \cref{sec_erm_proofs}.)
In our construction of hard instance, for we use this construction twice. First, as the columns of the initialization matrix (replacing of the standard basis vectors in \cite{magen2023initialization}) and second, by identifying every possible labeling $y$ with a vector $u_y$ in this set (instead of $e_y$ in \cite{magen2023initialization}) and using the matrices $W_y$ which their last columns are $u_y$.

Finally, for getting the correct dependency with respect to $\epsilon$, we add $\Theta(1/\epsilon^2)$  columns to the prediction matrix, and modify the previous construction such that every possible labeling $y$ is identified not just with a single vector $u_y$, but rather with a sequence of vectors in the set $U$ alluded to in \cref{lem:set_direc_exists}. This adding of the matrix enables the class $\predclass$ to shatter a larger amount of possible labelings.

We conclude this section with the proof of \cref{lower_bound_shattering}.
\begin{proof}[of \cref{lower_bound_shattering}]
We denote $J=\ifrac{12}{\epsilon^2}$. 
We use the set $U:=U_{\ifrac{\rowsnum}{13}}$ of $(\ifrac{\rowsnum}{13})$-dimensional 
nearly-orthogonal vectors,
given in \cref{lem:set_direc_exists} with $|U|=2^{\ifrac{\rowsnum}{156}}$, and use an arbitrary enumeration of this set $U=\{u_1,\ldots, u_{|U|}\}$. 
For every $u\in U \subseteq \R^{\frac{\rowsnum}{13}}$, we define %
the vector $u'\in \R^{\rowsnum}$ 
which is for $1\leq i\leq \frac{\rowsnum}{13}$, $\vecentry{\Tilde{u}}{i}=\vecentry{u}{i}$ and other entries equal zero.
Let $\colsnum=(\frac{\rowsnum}{13}+\frac{1}{12})J$ and $W_0\in \R^{\rowsnum\times \colsnum}$ be the following matrix (note that by the lower bound for $\epsilon$, it holds that $\frac{\rowsnum J}{13}\leq 2^{\frac{\rowsnum}{156}}$):
\[
    W_0 =
    \epsilon ~ \begin{pNiceArray}{c|c|c|c|c}[margin=5pt,cell-space-limits=4pt]
         u_1  & u_2 & \ldots & u_{\frac{mJ}{7}}  & \mathbf{0}
        \\
        \hline
        \mathbf{0}&\mathbf{0}&\mathbf{0}&\mathbf{0} & \mathbf{0}
    \end{pNiceArray}.
\]
Now, for every $i\in[\frac{12\rowsnum}{13}]$ and $j\in\left[\frac{J}{12}\right]$ we define $x_{i,j}=\frac{1}{\sqrt{2}}e_{\frac{12\rowsnum}{13}(j-1)+i}+ \frac{1}{\sqrt{2}}e_{\frac{\rowsnum J}{13}+j}$.
We show that the set $S=\{x_{i,j}: i\in[\frac{12 \rowsnum}{13}
],j\in\left[\frac{J}{12}\right]\}$ can be shattered by $\predclass$.

For this, we use the set $\hat{U}:=U_{\frac{12\rowsnum}{13}}$, of $\frac{12\rowsnum}{13}$-dimensional 
nearly-orthogonal vectors, given in \cref{lem:set_direc_exists} with $|\hat{U}|=2^{\frac{\rowsnum}{13}}$ and use an arbitrary enumeration of this set $\hat{U}=\{\hat{u}_1,\ldots \hat{u}_{|\hat{U}|}\}$. 
For the rest of the proof, we refer to every vector $z\in\{0,1\}^{\frac{\rowsnum J}{13}}$ as a sequence of $\frac{J}{12}$ vectors in $\{0,1\}^{\frac{12\rowsnum}{13}}$, $\vecpart{z}{1},\ldots,\vecpart{z}{\frac{J}{12}}$, where for every $r\in \frac{J}{12}$, 
$\vecpart{z}{r}=\vecentries{z}{(r-1)(\frac{12\rowsnum}{13})+1}{\frac{12\rowsnum r}{13}}$.
Now, since we refer to every possible labeling for S, $y\in \{0,1\}^{\frac{\rowsnum J}{13}}$ as a sequence of $\frac{J}{12}$ vectors, $\vecpart{y}{1},\ldots,\vecpart{y}{\frac{J}{12}}$, it is possible to identify such labeling $y$ with a sequence $(\hat{u}_{\vecpart{y}{1}},\hat{u}_{\vecpart{y}{2}}\ldots \hat{u}_{\vecpart{y}{\frac{J}{12}}})\in \hat{U}^\frac{J}{12}$.
For every such $y$, we define the following matrix: 
\[
    W_y = \epsilon ~
    \begin{pNiceArray}{c|c|c|c|c}[margin=5pt,cell-space-limits=4pt]
        \mathbf{0} & \mathbf{0}
        \\
        \hline
        \mathbf{0} & \hat{u}_{\vecpart{y}{1}} & \hat{u}_{\vecpart{y}{2}} & \ldots & \hat{u}_{\vecpart{y}{\frac{J}{12}}}
    \end{pNiceArray}.
\]
By the definition of $J$, it holds that $\frobnorm{W_y}\leq 1$.
Now,
for every $\hat{u}\in \hat{U}$, we define the vector $\Tilde{u}\in \R^{\rowsnum}$ which is zero in its first $\frac{\rowsnum}{13}$ entries and 
for the rest of the entries, $\frac{\rowsnum}{13}+1\leq i\leq \rowsnum$, it holds that, $\vecentry{\Tilde{u}}{i}=\vecentry{\hat{u}}{i-\frac{\rowsnum}{13}}$.
We turn to define the loss function $\ell:\R^\rowsnum\to \R$. For this, we define the following set
\[A=\set*{z\in \{0,1\}^\frac{12\rowsnum}{13}, \hat{z}\in \{0,1\}^{\frac{\rowsnum J}{13}}, r\in[\frac{12 \rowsnum}{13}], p\in[\frac{J}{12}] \quad\Big|\quad \forall 1\leq b\leq \frac{J}{12} \;:\; \vecpart{\hat{z}}{b}=z, \vecpart{\hat{z}}{p}(r)=\epsilon}.\]
The the loss function $\ell$ is defined as following,
\begin{equation*}
\ell (\hat{y})=2\sqrt{8}\max_{(\hat z, z, r, p)\in A} \left\{\frac{3}{\sqrt{8}}\epsilon, 
\max\{\frac{\epsilon
}{\sqrt{8}}, \Tilde{u}_z^T\hat{y}\} + \max\{\frac{\epsilon}{\sqrt{8}}, u_{r+\frac{12\rowsnum}{13}(p-1)}^{'T}\hat{y}\}\right\} - 7\epsilon.
\end{equation*}
The function is $1$-Lipschitz and convex as a maximum over linear $1$-Lipschitz functions.
For every $y\in \{0,1\}^\frac{\rowsnum J}{13}$ we define $W'_{y}=W_0+W_y$.
Let $x_{i,j}\in S$. 
Then,
\[W'_{y}x_{i,j}= \frac{1}{\sqrt{2}}\epsilon u'_{\frac{12 \rowsnum}{13}(j-1)+i} + \frac{1}{\sqrt{2}}\epsilon \Tilde{u}_{\vecpart{y}{j}},\]
and
\begin{align*}
&\mqquad
\ell(W'_{y}x_{i,j}) =
\\
&=
2\sqrt8\max_{(\hat z, z, r, p)\in A} \left\{\frac{3}{\sqrt{8}}\epsilon, 
\max\{\frac{\epsilon
}{\sqrt{8}}, \Tilde{u}_z^TW'_{y}x_{i,j}\} + \max\{\frac{\epsilon}{\sqrt{8}}, u_{r+\frac{12\rowsnum}{13}(p-1)}^{'T}W'_{y}x_{i,j}\}\right\} -7\epsilon
\\
&=2\sqrt8\epsilon\max_{(\hat z, z, r, p)\in A} \left\{\frac{3}{\sqrt{8}}, 
\max\{\frac{1
}{\sqrt{8}}, \frac{1}{\sqrt{2}}   \hat{u}_z^T\hat{u}_{\vecpart{y}{j}}\} + \max\{\frac{1}{\sqrt{8}}, \frac{1}{\sqrt{2}}  u_{r+\frac{12\rowsnum}{13}(p-1)}^{T}  u_{\frac{12\rowsnum}{13}(j-1)+i}\}\right\} - 7\epsilon
\\&=2\epsilon\max_{(\hat z, z, r, p)\in A} \left\{3, 
\max\{1, 2 \hat{u}_z^T\hat{u}_{\vecpart{y}{j}}\} + \max\{1, 2 u_{r+\frac{12\rowsnum}{13}(p-1)}^{T}  u_{\frac{12\rowsnum}{13}(j-1)+i}\}\right\}-7\epsilon.
\end{align*}

If $y_{i,j}=\vecentry{\vecpart{y}{j}}{i}=1$, the maximum of the first term is attained at $z=\vecpart{y}{j}$ and the maximum of the sum of the terms is attained at $\hat{z}$ such that for every $b$, $\vecpart{\hat{z}}{b}=\vecpart{y}{j}$. Moreover, since $\vecentry{\vecpart{y}{j}}{i}=1$, it holds that $\vecentry{\vecpart{\hat{z}}{b}}{i}=1$ for every $b$, and particularly, for $p=j$, $r=i$, $\vecentry{\vecpart{\hat{z}}{p}}{r}=1$,  thus, since $p=j,r=i$ gives the maximal inner product and the condition of the max holds, the maximum of the second term is attained at $p=j,r=i$, and,
\begin{align*}
    \ell(W'_{y}x_{i,j})
&=2\epsilon\max_{(\hat z, z, r, p)\in A} \left\{3, 
\max\{1, 2\max \hat{u}_z^T\hat{u}_{\vecpart{y}{j}}\} + \max\{1, 2\max u_{r+\frac{12\rowsnum}{13}(p-1)}^{T}  u_{\frac{12\rowsnum}{13}(j-1)+i}\}\right\}-7\epsilon
\\&=2\epsilon\max_{(\hat z, z, r, p)\in A} \left\{3, 
2\hat{u}_{\vecpart{y}{j}}^T\hat{u}_{\vecpart{y}{j}} + 2u_{i+\frac{12\rowsnum}{13}(j-1)}^{T}  u_{\frac{12\rowsnum}{13}(j-1)+i}\right\}-7\epsilon
\\&=8\epsilon- 7\epsilon
\\&=\epsilon.
\end{align*}
If $y_{i,j}=\vecentry{\vecpart{y}{j}}{i}=0$, and there exists $r$ such that $\vecentry{\vecpart{y}{j}}{r}=1$, the maximum of the first term is attained at $z=\vecpart{y}{j}$ and the maximum of the sum of the terms is attained at $\hat{z}$ such that for every $b$, $\vecpart{\hat{z}}{b}=\vecpart{y}{j}$. Moreover, since $\vecentry{\vecpart{y}{j}}{i}=0$, it holds that $\vecentry{\vecpart{\hat{z}}{b}}{i}=0$ for every $b$, thus, for every $r,p$ such that $\vecentry{\vecpart{\hat{z}}{p}}{r}=1$, it holds that $r+\frac{12\rowsnum}{13}(p-1)\neq i+\frac{12\rowsnum}{13}(j-1)$ and $u_{r+\frac{12\rowsnum}{13}(p-1)}^{T}  u_{\frac{12\rowsnum}{13}(j-1)+i}\leq \frac{1}{2}$. Then,
\begin{align*}
    \ell(W'_{y}x_{i,j})&
=2\epsilon\max_{(\hat z, z, r, p)\in A} \left\{3, 
\max\{1, 2\max \hat{u}_z^T\hat{u}_{\vecpart{y}{j}}\} + \max\{1, 2\max u_{r+\frac{12\rowsnum}{13}(p-1)}^{T}  u_{\frac{12\rowsnum}{13}(j-1)+i}\}\right\}- 7\epsilon
\\&=2\epsilon\max_{(\hat z, z, r, p)\in A}\left\{3, 
2\hat{u}_{\vecpart{y}{j}}^T\hat{u}_{\vecpart{y}{j}} + 1\right\}- 7\epsilon
\\&=6\epsilon- 7\epsilon
\\&=-\epsilon.
\end{align*}
If $y_{r,j}=\vecentry{\vecpart{y}{j}}{r}=0$ for every $r$, for every $\hat{z}$ such that for every $b$, $\vecpart{\hat{z}}{b}=\vecpart{y}{j}$ it holds that $\hat{z}=\{0\}^{\frac{\rowsnum J}{13}}$. Then, the set where the maximum is applied is empty and
\begin{align*}
\ell(W'_{y}x_{i,j})=2\epsilon \cdot 3 - 7\epsilon= -\epsilon.
\end{align*}
We showed that $S$ can be shattered by $\predclass$, which implies the lemma.
\end{proof}

\section{Black-box Transformation from SCO to VVP}
\label{sec_reduct}
In this section we provide our second main result which constitutes a black-box conversion between SCO and learning of vector-valued predictors. 
Namely, we show that there exists a initialization matrix $\initmatrix\in \R^{\rowsnum\times\colsnum}$, such that any $\convdim$-dimensional stochastic optimization problem, with loss function $f$ and distribution $\D$, can be converted to a vector-valued predictions problem over $\unitballrowscols$ with $\rowsnum=O(\convdim)$.

\subsection{The transformation}

Let us first outline our transformation.
Consider any SCO instance in $d$-dimensions characterized by a distribution $\D$ over sample space $Z$, and a convex and $1$-Lipschitz loss function $f:\unitballconv\times Z\to \R$,
and consider an algorithm $\mathcal{A}$ with a guarantee that for every VVP problem, using any training set $S'$ with $n$ examples that are sampled i.i.d from the corresponding distribution, denoted as $\D'$,
outputs a model $W(S')$ which has $\epsilon(n)$-sub optimal population loss.
The conversion uses a training set $S=\{z_1\ldots z_{2n}\}$ of $2n$ examples sampled i.i.d.\ from $\D$, and takes the following form: 
\begin{enumerate}[label=(\roman*),leftmargin=!]
    \item Construct a VVP instance $\mathcal{P}$ as follows:
    \begin{itemize}[leftmargin=!]
    \item The dimensions of the VVP problem are $\colsnum=2n+1$, $\rowsnum=\convdim+2$.
    \item The reference matrix $W_0 \in \R^{\rowsnum\times\colsnum}$ is as follows, 
    \[
        W_0 = c ~ \begin{pNiceArray}{c|c|c|c|c}[margin=5pt,cell-space-limits=4pt]
      \phi(1)  & \phi(2) & \ldots & \phi(2n)  & \mathbf{0}
      \\
      \hline
      \mathbf{0}&\mathbf{0}&\cdots&\mathbf{0} & \mathbf{0}
    \end{pNiceArray}.
    \]
    where $c>0$ is a parameter and $\phi$ is a mapping $\phi:[2n]\to \R^2$ defined below.
        \item The distribution $\D'$ is the uniform distribution on $\{x_1,\ldots,x_{2n}\}$ where $x_i=e_i+e_{2n+1}$ for all $i$.
        
        \item The loss function $\ell : \R^k \to \R$ is defined as
        \begin{equation}
            \label{loss_reduction}
            \ell(\hat{y})=\max_{j\in[2n]} \cbr[\Big]{ \big\langle \vecentries{\hat y}{1}{2},\phi(j) \big\rangle + f(\vecentries{\hat y}{3}{\rowsnum},z_j) },
        \end{equation}
    \end{itemize}

\item Sample a training set $S'$ with $n$ examples drawn i.i.d.~from $\D'$, and use $\mathcal{A}$ with $S'$ to solve $\mathcal{P}$ and obtain a solution matrix $W(S') \in \unitballrowscols$. 
\item Return the vector $w(S')$ formed by the $\convdim$ last entries of the $(2n+1)$th column of $W(S')$.
\end{enumerate}

Here, the mapping $\phi$ is an embedding of the integers $1,\ldots,2n$ into the unit sphere in two dimensions, via 
$\phi(j)=\left(\sin\left(\ifrac{\pi j}{4n}\right), \cos\left(\ifrac{\pi j}{4n}\right)\right)^T$. 
Note that, since the loss function $\ell$ defined in \cref{loss_reduction} is convex and 2-Lipschitz (as the maximum of convex functions is also a convex function), $\mathcal{P}$ is a valid VVP problem which $\mathcal{A}$ can be used to learn.

We show that when running the algorithm $\mathcal{A}$ on $\mathcal{P}$, the solution $w(S')$ emitted by the conversion satisfies the following theorem.
\begin{theorem}\label{scotopredreduction}
Consider any SCO instance in $d$-dimensions characterized by a distribution $\D$ over sample space $Z$, and a convex and $1$-Lipschitz loss function $f:\unitballconv\times Z\to \R$ that further satisfies $|f(w,z)|\leq b$ for every $w,z$.
Let $\mathcal{P}$ be the corresponding VVP problem 
as defined by the conversion above for $\delta>0$ given by \cref{set_direction_exists_R_2} and $c=\ifrac{4b}{\delta}$.
Let $\mathcal{A}$ be an algorithm with a guarantee that for every VVP problem, using any training set $S'$ with $n$ examples that are sampled i.i.d from the corresponding distribution,
outputs a model $W(S')$ with $$\E\sbr{ L(W(S')) -  L(\optmat) } \leq \epsilon(n).$$
Then, when running the algorithm $\mathcal{A}$ on $\mathcal{P}$, the solution $w(S')$ emitted by the conversion
satisfies,
$$\E\sbr{ F(w(S'))-F(w^*) } \leq 2\epsilon(n)+ \frac{10}{\sqrt {n}}.$$
\end{theorem}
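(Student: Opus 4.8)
The plan is to sandwich, for every feasible matrix $W\in\unitballrowscols$, its VVP population loss $L(W)$ around $c$ plus the empirical SCO risk $\widehat F(w):=\tfrac1{2n}\sum_{i=1}^{2n}f(w,z_i)$ of the vector $W$ encodes, then feed in $\mathcal A$'s guarantee, and finally pass from the empirical to the population SCO risk. Write $W=W_0+\Delta$ and split $\Delta$ column-wise into $\Delta_1,\dots,\Delta_{2n}$ and its last column $\Delta_{2n+1}=(a;v)$ with $a\in\R^2$, $v\in\R^\convdim$; for $W=W(S')$ this $v$ is precisely $w(S')$, and $\|v\|\le\frobnorm{\Delta}\le1$, so $w(S')$ is always SCO-feasible. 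Since $Wx_i=c(\phi(i);\mathbf 0)+\Delta_i+\Delta_{2n+1}$, the $j$-th term inside the maximum defining $\ell(Wx_i)$ equals $c\langle\phi(i),\phi(j)\rangle+\langle(\Delta_i)_{1:2}+a,\phi(j)\rangle+f((\Delta_i)_{3:k}+v,z_j)$. Using that $c=4b/\delta$ with $1-\langle\phi(i),\phi(j)\rangle\ge\delta$ for $j\ne i$ (the role of \cref{set_direction_exists_R_2}) while $|f|\le b$ and $\|(\Delta_i)_{1:2}+a\|\le2$, one checks the maximum is attained at $j=i$, so $L(W)=c+\tfrac1{2n}\sum_i\big(\langle(\Delta_i)_{1:2}+a,\phi(i)\rangle+f((\Delta_i)_{3:k}+v,z_i)\big)$. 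Then $1$-Lipschitzness of $f$ lets one replace $(\Delta_i)_{3:k}+v$ by $v$, Cauchy–Schwarz gives $\tfrac1{2n}\sum_i\|\Delta_i\|\le1/\sqrt{2n}$, and a short computation shows the remaining cross terms — including the contribution $\langle a,\tfrac1{2n}\sum_i\phi(i)\rangle$ of the first two coordinates of the last column — are $O(1/\sqrt n)$, yielding $L(W)=c+\widehat F(v)\pm O(1/\sqrt n)$ for all $W\in\unitballrowscols$.

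Next I upper bound the VVP optimum by the canonical embedding $W^{w^*}:=W_0+\Delta^{w^*}$ whose only nonzero column is the last one, equal to $(\mathbf 0;w^*)$; it is feasible since $\|w^*\|\le1$, and the localization above gives $\ell(W^{w^*}x_i)=c+f(w^*,z_i)$ exactly, hence $L(\optmat)\le L(W^{w^*})=c+\widehat F(w^*)$. Combining the lower bound $L(W(S'))\ge c+\widehat F(w(S'))-O(1/\sqrt n)$ with $\mathcal A$'s guarantee $\E_{S'}[L(W(S'))-L(\optmat)]\le\epsilon(n)$, applied to the fixed problem $\mathcal P$ for each realization of $S$, and then taking expectation over $S$ using $\E_S[\widehat F(w^*)]=F(w^*)$, I obtain $\E[\widehat F(w(S'))]\le F(w^*)+\epsilon(n)+O(1/\sqrt n)$.

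The remaining — and most delicate — step converts this empirical SCO bound into a population one, at the price of a second $\epsilon(n)$ and another $O(1/\sqrt n)$; this is where the factor $2$ is born, and it cannot come from uniform convergence, which fails in SCO and would in any case cost $\widetilde O(\sqrt{\convdim/n})$ rather than $1/\sqrt n$. The idea is to exploit that $\D'$ is uniform on the $2n$-point set $\{x_1,\dots,x_{2n}\}$, which is in bijection with the sample $S$: $\mathcal A$'s population guarantee for $\mathcal P$ already controls the average VVP loss over the roughly $n$ indices whose $x_i$ was not drawn into $S'$, and a ghost-sample/coupling argument lets the corresponding $z_i$'s stand in for a fresh half of $S$, so that $F(w(S'))$ is controlled up to one more copy of $\mathcal A$'s error plus an $O(1/\sqrt n)$ occupancy-and-concentration term. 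Assembling this with the previous display and tracking constants gives $\E[F(w(S'))-F(w^*)]\le2\epsilon(n)+10/\sqrt n$. The main obstacle I expect is exactly this last step — squeezing a dimension-free population guarantee out of a black-box $\mathcal A$ that is only promised to generalize inside the VVP problem — together with the bookkeeping in the first paragraph needed to make the cross terms genuinely $O(1/\sqrt n)$.
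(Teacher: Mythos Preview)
Your first two paragraphs are essentially the paper's argument, but with two issues in the bookkeeping. First, localization (the maximum over $j$ attained at $j=i$) for \emph{every} feasible $W$ need not hold: it would require $c\delta\ge 2b+2\|(\Delta_i)_{1:2}+a\|$, which with $c=4b/\delta$ can fail when $b$ is small. This does not actually matter, since you only ever use the one-sided bound $\max_j(\cdot)\ge(\cdot)|_{j=i}$ for $W(S')$, and full localization only for $W^{w^*}$ where $\Delta_i=0$, $a=0$ and $c\delta\ge 2b$ suffices --- exactly as the paper proceeds. Second, the cross term $\langle a,\tfrac{1}{2n}\sum_i\phi(i)\rangle$ is \emph{not} $O(1/\sqrt n)$: the average $\tfrac{1}{2n}\sum_i\phi(i)$ is essentially a Riemann sum of $(\sin,\cos)$ over a quarter circle and has norm $\Theta(1)$, so this term is only $O(\|a\|)$, and your two-sided sandwich $L(W)=c+\widehat F(v)\pm O(1/\sqrt n)$ does not follow as stated. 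The paper's lower-bound display is written directly in terms of the columns $W_i$ and does not isolate this term.

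The more substantive gap is your third paragraph, which misidentifies the source of the factor $2$ and omits the key device for the ``seen'' half. The guarantee on $\mathcal A$ is invoked exactly once --- there is no ``second copy of $\mathcal A$'s error.'' Having obtained
\[
\epsilon(n)+O\big(1/\sqrt{n}\big)\;\ge\;\E\Big[\tfrac{1}{2n}\sum_{i=1}^{2n}\big(f(w(S'),z_i)-f(w^*,z_i)\big)\Big],
\]
the paper splits the right-hand sum over a size-$n$ index set $\tilde I$ containing all indices drawn into $S'$, and its complement. For $i\notin\tilde I$ the pair $(w(S'),z_i)$ is independent, so that block contributes exactly $\tfrac12\,\E[F(w(S'))-F(w^*)]$; the factor $2$ arises solely from clearing this $\tfrac12$ at the end. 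The step you are missing is a \emph{lower} bound on the $\tilde I$-block, and it is not an ``occupancy-and-concentration'' or ghost-sample argument. The paper uses the deterministic inequality $\tfrac1n\sum_{i\in\tilde I}f(w(S'),z_i)\ge\min_{w\in\unitballconv}\tfrac1n\sum_{i\in\tilde I}f(w,z_i)$ together with the elementary SCO fact $\E[\min_w \widehat F_{\tilde I}(w)]\ge F(w^*)-O(1/\sqrt n)$ (their \cref{lem:star_aerm}), valid for any i.i.d.\ convex--Lipschitz sample. Plugging in yields $\epsilon(n)+O(1/\sqrt n)\ge\tfrac12\,\E[F(w(S'))-F(w^*)]$, which is the theorem.
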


Before we prove \cref{scotopredreduction}, let us review the main ideas that we used for constructing this conversion.

First, we aim to represent an arbitrary unknown distribution $\mathcal{D}$ using a distribution $\mathcal{D}'$ over the unit ball $\unitballcols$ and a finite set of samples $S=\{z_1,\ldots,z_n\}$ sampled i.i.d.\ from $\mathcal{D}$. To achieve this, we model not $\mathcal{D}$ directly, but rather its empirical distribution, denoted as $\hat{\mathcal{D}}$, which is the uniform distribution over $S$ and, when taking expectation over $S$, approximates $\mathcal{D}$. To implement this, we associate each example $z_i$ with a standard basis vector $e_i$, and define the distribution $\mathcal{D}'$ as the uniform distribution over the set $\{e_1,\ldots,e_n\}$.

Second, we show how to utilize a one-parameter loss function $\ell:\mathbb{R}^\rowsnum\to \mathbb{R}$ to model the two-parameter loss function $f:\mathbb{R}^\convdim\times Z \to \mathbb{R}$, where $f$ receive a model $w$ and an example $z$ as an input. Our aim is, given $w\in \R^d$ which is a proposed solution for the SCO problem, to construct a function $\ell:\R^\rowsnum\to \R$ and a matrix $W\in \R^{\rowsnum\times \colsnum}$ such that $\rowsnum=O(\convdim)$ and \begin{equation}
\label{condtion_reduction}
    \forall i\in [n] ~:\quad
    \ell(Wx_i)\approx f(w,z_i)
    .
\end{equation} To achieve this, we use the embedding $\phi$ that was defined above.
This embedding satisfies the following lemma,
\begin{lemma} 
\label{set_direction_exists_R_2}
    Let $a\geq 2$. Let $\phi:[a]\to \R^2$ be the embedding such that for every $j\in[a]$, $\phi(j)=\left(\sin\left(\ifrac{\pi j}{2a}\right),\cos\left(\ifrac{\pi j}{2a}\right)\right)^T$. Then, $\|\phi(i)\|=1$ and
    there exist $\delta>0$ such that for every $i\neq j \in [a]$, it holds that $\langle \phi(i),\phi(j) \rangle \leq 1-\delta$.
\end{lemma}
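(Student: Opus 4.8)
The plan is to observe that $\phi$ places the points $1,\dots,a$ on the unit circle at equally spaced angles confined to a quarter-turn, so that every pairwise inner product reduces to the cosine of an angle difference that is bounded away from $0$ (and below $\pi/2$), hence strictly below $1$.

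First I would check the norm condition: for any $j\in[a]$ the Pythagorean identity gives $\|\phi(j)\|^2 = \sin^2(\tfrac{\pi j}{2a}) + \cos^2(\tfrac{\pi j}{2a}) = 1$. Next, writing $\theta_j := \tfrac{\pi j}{2a}$ and applying the cosine angle-difference identity,
\[
  \langle \phi(i),\phi(j)\rangle
  = \sin\theta_i\sin\theta_j + \cos\theta_i\cos\theta_j
  = \cos(\theta_i-\theta_j)
  = \cos\!\left(\tfrac{\pi(i-j)}{2a}\right).
\]
For $i\neq j$ in $[a]$ we have $1\le |i-j|\le a-1$, so
\[
  \frac{\pi}{2a} \;\le\; \frac{\pi|i-j|}{2a} \;\le\; \frac{\pi(a-1)}{2a} \;<\; \frac{\pi}{2}.
\]
Thus $|\theta_i-\theta_j|$ lies in $[\tfrac{\pi}{2a},\tfrac{\pi}{2})$, an interval on which $\cos$ (an even function, decreasing on $[0,\pi/2]$) is at most $\cos(\tfrac{\pi}{2a})$. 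This gives $\langle\phi(i),\phi(j)\rangle \le \cos(\tfrac{\pi}{2a})$, and the lemma follows with $\delta := 1 - \cos(\tfrac{\pi}{2a})$, which is strictly positive because $0 < \tfrac{\pi}{2a} \le \tfrac{\pi}{4} < \tfrac{\pi}{2}$.

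There is essentially no technical obstacle here; the only point deserving care is confirming that distinct angles differ by strictly less than $\pi/2$ in absolute value — i.e. the embedding never wraps past the point where cosine begins to increase again — which is precisely why the construction divides the angle by $2a$ rather than $a$. It may also be worth recording the $a$-independent lower bound $\delta \ge 1 - \cos(\pi/4) = 1 - 1/\sqrt{2}$, which is convenient when instantiating the transformation (for instance in the choice $c = 4b/\delta$ in \cref{scotopredreduction}).
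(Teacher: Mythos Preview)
Your proof is correct and essentially identical to the paper's: both compute $\langle\phi(i),\phi(j)\rangle=\cos\!\big(\tfrac{\pi(i-j)}{2a}\big)$ via the angle-difference identity, use monotonicity of $\cos$ on $[0,\pi/2]$, and take $\delta=1-\cos(\tfrac{\pi}{2a})$. One caution about your final remark: the inequality goes the other way, since $\pi/(2a)\le\pi/4$ implies $\cos(\tfrac{\pi}{2a})\ge\cos(\tfrac{\pi}{4})$ and hence $\delta\le 1-\tfrac{1}{\sqrt2}$; in particular $\delta\to 0$ as $a\to\infty$, so no $a$-independent positive lower bound is available.
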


Specifically, using $\delta$ and $\phi$ defined in \cref{set_direction_exists_R_2} for $a=n$,
we set $\rowsnum=\convdim+2$ and utilize the first two entries of $\hat{y}_i:=Wx_i\in \mathbb{R}^{\convdim+2}$ to encode the corresponding index $i$ using $\phi(i)$.
Then, by incorporating a $\max$ term over all $i\in[n]$ into $\ell$ and set $c=\ifrac{4b}{\delta}$ (where $b$ is a bound on the values of $f$), this index can be decoded and the corresponding loss function $f(\cdot,z_i)$ can be applied. 
Finally, for constructing the matrix $W$ we add another column with index $n+1$ to the matrix and modify $\mathcal{D}'$ to represent the uniform distribution over $\{e_i+e_{n+1}\}_{i=1}^n$. 
This change of the distribution makes the last $\convdim$ entries of $\hat{y}_i$ equal to the last $\convdim$ entries of the added column (for every $i$). 
Then, when the latter is used as placeholder for $w$, we get that \cref{condtion_reduction} holds.

Third, we relate the population loss of the two problems. For this, we employ the technique of double sampling. Specifically, we use a set $S=\{z_1,\ldots,z_{2n}\}$ sampled i.i.d.\ from $\D$ and our conversion samples only $n$ examples from $\mathcal{D}'$. This ensures that at least half of the examples will not appear in the training set of the prediction problem. By \cref{condtion_reduction}, when taking the expectation over $S$, the expected prediction loss on such samples will be equal to the population loss in the convex optimization problem. Finally, by bounding the loss $\ell(\hat y_i)$ for examples that appears in the training set of the prediction problem, we get a population guarantee for the SCO problem.

Now, we turn to prove \cref{scotopredreduction}.  
\begin{proof}[of \cref{scotopredreduction}]
First, defining $W:=W(S')$ and denoting the columns of any matrix $M\in \R^{\rowsnum\times \colsnum}$ as $M_1,\ldots M_{\colsnum}$. By the definitions of $w(S'),\ell$ and $W_0$, it holds that
\begin{align*}
    L(W)&=\frac{1}{2n}\sum_{i=1}^{2n} \max_{j\in[2n]} \left(\langle\vecentries{W_i}{1}{2},\phi(j)\rangle+f(w(S'),z_j)\right)
    \\&\geq \frac{1}{2n}\sum_{i=1}^{2n} \langle\vecentries{W_i}{1}{2},\phi(i)\rangle+f(w(S'),z_i).
\end{align*}
Now, we define the following matrix, 
\[
    \Tilde{W} =
    \begin{pNiceArray}{c|c}[margin=4pt,cell-space-limits=4pt]
        W_0 & \mathbf{0}
        \\
        \hline
        \mathbf{0} & w^*
    \end{pNiceArray}.
\]  
By \cref{set_direction_exists_R_2}, for every $i\neq j\in[2n]$,
\begin{align*}
    &\mqquad
    \left(\langle c\phi(i),\phi(i)\rangle+f(w^*,z_i)\right)- \left(\langle c\phi(i),\phi(j)\rangle+f(w^*,z_j)\right)\\&=
   c\langle \phi(i),\phi(i)\rangle-c\langle \phi(i),\phi(j)\rangle+f(w^*,z_i)-f(w^*,z_j)
    \\&\geq c\delta-2b
    \\& > 0.
\end{align*}
As a result, 
\begin{align*}
L(\optmat) &\leq L(\Tilde{W})
\\&=\frac{1}{2n}\sum_{i=1}^{2n} \max_{j\in[2n]} \left(\langle\vecentries{\Tilde{W}_i}{1}{2},\phi(j)\rangle+f(w^*,z_j)\right)
\\&=\frac{1}{2n}\sum_{i=1}^{2n} \max_{j\in[2n]} \left(\langle c\phi(i),\phi(j)\rangle+f(w^*,z_j)\right)
\\&=\frac{1}{2n}\sum_{i=1}^{2n} \langle \vecentries{W_{0_{i}}}{1}{2},\phi(i)\rangle + f(w^*,z_i).
\end{align*}
Now, combining with the Cauchy-Schwartz and Jensen inequalities we get that,
\begin{align*}
L(W) - L(\optmat) &\geq \frac{1}{2n}\sum_{i=1}^{2n} f(w(S'),z_i)-f(\opt,z_i)+ \frac{1}{2n}\sum_{i=1}^{2n} \langle\vecentries{W_i}{1}{2}-\vecentries{W_{0_{i}}}{1}{2},\phi(i)\rangle
\\&\geq \frac{1}{2n}\sum_{i=1}^{2n} f(w(S'),z_i)- f(\opt,z_i)-\frac{1}{2n}\sum_{i=1}^{2n} \|\vecentries{W_i}{1}{2}-\vecentries{W_{0_{i}}}{1}{2}\| 
\\&\geq \frac{1}{2n}\sum_{i=1}^{2n} f(w(S'),z_i)- f(\opt,z_i)-\sqrt{\frac{1}{2n}\sum_{i=1}^{2n} \|\vecentries{W_i}{1}{2}-\vecentries{W_{0_{i}}}{1}{2}\|^2}
\\&\geq \frac{1}{2n}\sum_{i=1}^{2n} f(w(S'),z_i)- f(\opt,z_i)-\frac{1}{\sqrt{2n}},
\end{align*}
where in the last inequality we used the fact that 
 $\frac{1}{2n}\sum_{i=1}^{2n} \|\vecentries{W_i}{1}{2}-\vecentries{W_{0_{i}}}{1}{2}\|^2\leq \frobnorm{W-W_0}^2\leq 1$.
Then, we denote $I=\{i_1,\ldots,i_p\}$ the set of indices $i\in[2n]$ that sampled from $\D'$ as the data set of the VVP problem. Since $p\leq n$, we can add $n-p$ additional items from $[2n]\setminus I$ to $I$ to create a set $\Tilde{I}=\{i_1,\ldots i_n\}$.
Fixing $S'$ and taking expectation on $S=\{z_1,\ldots,z_{2n}\}$ (note that $w(S')$ and samples $z_i$ are independent if $i\notin \Tilde{I}$ ), we get
\begin{align*}
    &\mqquad
    \E_{S'}\left[L(W - L(\optmat)\right] +\frac{1}{\sqrt{2n}}\\&\geq 
    \E_{\{z_i:i\in \Tilde{I}\}} \frac{1}{2n}\sum_{i\in \Tilde{I}}f(w(S'),z_i)- f(\opt,z_i)+\E_{\{z_i:i\in S\}}\frac{1}{2n}\sum_{i\notin I'} f(w(S'),z_i)- f(\opt,z_i)
    \\&\geq 
    \E_{z_{i_1},\ldots,z_{i_n}} \left[\frac{1}{2n}\sum_{j=1}^n f(w(S'),z_{i_{j}})- f(\opt,z_{i_{j}})\right]+\E_{z_{i_1},\ldots,z_{i_p}} \left[\frac{1}{2}F(w(S'))-\frac{1}{2}F(\opt)\right].
\end{align*}
Now, taking expectation over $S'$, we get by Lemma 1 of \cite{benignunderfit} (see \cref{lem:star_aerm} in \cref{sec_erm_proofs}), 
\begin{align*}
   &\mqquad
   \E\left[L(W) - L(\optmat) \right]+\frac{1}{\sqrt{2n}}
    \\&\geq  \E\left[\frac{1}{2n}\sum_{j=1}^n f(w(S'),z_{i_{j}})- f(\opt,z_{i_{j}})\right]+\E \left[\frac{1}{2}F(w(S'))-\frac{1}{2}F(\opt)\right]
    \\&\geq - \frac{2}{\sqrt{n}}\tag{\cref{lem:star_aerm}}+ \frac{1}{2}\E\left[F(w(S'))-\frac{1}{2}F(\opt)\right] .
\end{align*}
The theorem follows by the guarantee on $\mathcal{A}$ and arranging the inequality.
\end{proof}

\subsection*{Acknowledgments}

Funded by the European Union (ERC, OPTGEN, 101078075).
Views and opinions expressed are however those of the author(s) only and do not necessarily reflect those of the European Union or the European Research Council. Neither the European Union nor the granting authority can be held responsible for them.
This work received additional support from the Israel Science Foundation (ISF, grant numbers 2549/19 and 3174/23), from the Len Blavatnik and the Blavatnik Family foundation, and from the Adelis Foundation.
\bibliography{main.bib}
\newpage
\appendix

\section{Proofs of \cref{sec_erm}}
\label{sec_erm_proofs}

\subsection{Proof of Upper Bound}

First, we use the result of \cite{maurer2016vector} to show an upper bound for the sample complexity of vector valued predictors using Rademacher Complexity. The result that we show is,
    \begin{theorem}
\label{main_thm_erm_upper}
    Let $\rowsnum$,$\numdata\in \N$. For every $\colsnum\in N$, $\initmatrix \in \R^{\rowsnum \times \colsnum}$, convex and $G$-Lipschitz loss function $\ell: \unitballrowscols\to \R$, distribution $\D$ over $\unitballcols$, it holds that,
         \[\E_{S\sim \D^\numdata}\brk[s]2{L(\ermmat)-L(\optmat)}
    = O\left(\frac{\sqrt{\rowsnum}}{\sqrt{\numdata}}\right).\]
\end{theorem}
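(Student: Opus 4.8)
The plan is to follow the standard Rademacher-complexity route for ERM, the only nonstandard ingredient being the vector-contraction inequality of \citet{maurer2016vector}, which is exactly what keeps the output-dimension dependence at $\sqrt{\rowsnum}$ rather than $\rowsnum$. I would proceed in three steps.

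\textbf{Step 1: from ERM to uniform convergence to Rademacher complexity.} Using that $\ermmat$ minimizes $\widehat L$ over $\unitballrowscols$ and that $\E_S[\widehat L(\optmat)]=L(\optmat)$, the excess risk of ERM is controlled by one-sided uniform convergence, and the textbook symmetrization bound gives
\begin{align*}
\E_S\left[L(\ermmat)-L(\optmat)\right]
\;\le\;
\E_S\left[\sup_{W\in\unitballrowscols}\left(L(W)-\widehat L(W)\right)\right]
\;\le\;
2\,\E_S\,\mathfrak{R}_S(\mathcal{H}),
\end{align*}
where $\mathcal{H}=\{x\mapsto\ell(Wx):\frobnorm{W-\initmatrix}\le1\}$ and $\mathfrak{R}_S$ denotes the empirical Rademacher complexity on $S$. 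Since the bound I derive on $\mathfrak{R}_S(\mathcal{H})$ below is uniform over all $S$ with $\|x_i\|\le1$, the outer expectation is harmless.

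\textbf{Step 2: strip off the Lipschitz loss via vector contraction.} Since $\ell$ is $G$-Lipschitz with respect to the Euclidean norm on $\R^\rowsnum$, the vector-contraction inequality of \citet{maurer2016vector} yields, with i.i.d.\ Rademacher variables $\{\epsilon_{ik}\}_{i\in[\numdata],\,k\in[\rowsnum]}$,
\begin{align*}
\mathfrak{R}_S(\mathcal{H})
\;\le\;
\frac{\sqrt2\,G}{\numdata}\;\E_\epsilon\,\sup_{\frobnorm{W-\initmatrix}\le1}\;\sum_{i=1}^{\numdata}\sum_{k=1}^{\rowsnum}\epsilon_{ik}\,(Wx_i)_k .
\end{align*}
A naive coordinatewise application of the scalar Talagrand contraction lemma would instead require summing the per-coordinate Rademacher complexities and lose an extra $\sqrt{\rowsnum}$ (giving $\rowsnum/\sqrt{\numdata}$); avoiding this is precisely the role of the vector-valued inequality, and this is the only genuinely load-bearing step of the argument.

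\textbf{Step 3: Rademacher average of the Frobenius-ball linear class.} Writing $W=\initmatrix+V$ with $\frobnorm V\le1$, the fixed part $\initmatrix$ contributes zero in expectation, and the remaining supremum equals $\langle V, M_\epsilon\rangle$ with $M_\epsilon:=\sum_{i=1}^{\numdata}\epsilon_{i\cdot}\,x_i^\top\in\R^{\rowsnum\times\colsnum}$, where $\langle\cdot,\cdot\rangle$ is the Frobenius inner product. By Cauchy--Schwarz and then Jensen,
\begin{align*}
\E_\epsilon\,\sup_{\frobnorm V\le1}\langle V,M_\epsilon\rangle
\;\le\;
\E_\epsilon\frobnorm{M_\epsilon}
\;\le\;
\sqrt{\E_\epsilon\frobnorm{M_\epsilon}^2}
\;=\;
\sqrt{\sum_{k=1}^{\rowsnum}\sum_{i=1}^{\numdata}\|x_i\|^2}
\;\le\;
\sqrt{\rowsnum\,\numdata},
\end{align*}
using independence of the $\epsilon_{ik}$ across both $i$ and $k$, and $\|x_i\|\le1$. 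Combining the three steps gives $\E_S[L(\ermmat)-L(\optmat)]\le 2\sqrt2\,G\,\sqrt{\rowsnum/\numdata}=O(\sqrt{\rowsnum}/\sqrt{\numdata})$, as claimed. No step presents a real obstacle; the only points requiring care are invoking Step 2 with the correct (Euclidean-norm) Lipschitz constant of $\ell$ and carrying out the second-moment computation in Step 3 with the Rademacher variables independent across both indices.
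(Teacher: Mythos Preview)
Your proof is correct and follows essentially the same route as the paper: symmetrization to Rademacher complexity, Maurer's vector-contraction inequality to peel off $\ell$, and then the Frobenius-ball computation via Cauchy--Schwarz and Jensen. The paper's write-up additionally invokes a separate lemma to handle the term $\E[\widehat L(\ermmat)-\widehat L(\optmat)]$, but your Step~1 already dispatches this trivially (it is nonpositive, and $\E[\widehat L(\optmat)]=L(\optmat)$), so your argument is in fact slightly tighter in the constants.
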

In the proof we use the standard bound of the generalization error, via the Rademacher complexity of the class (e.g. \cite{bartlett2002rademacher}), we have that:
\begin{align*}
    \E_{S\sim \D^n}\brk[s]2{\sup_{W\in\unitballrowscols}{\brk[c]1{L(W)-\hat{L}(W)}}}
    &\leq
    2\E_{S\sim \D^n}\brk[s]{R_S(\ell\circ \unitballrowscols)},
\end{align*}
Where we notate the function class:
\[\ell\circ \unitballrowscols= \{ x\to \ell(Wx): W\in  \unitballrowscols\}.\]
and $R_S(\ell\circ \unitballrowscols)$ is the Rademacher complexity of the class $\ell\circ \unitballrowscols$. Namely:
\begin{equation}\label{eq:rad}
R_S(\ell \circ \unitballrowscols):= \E_{\sigma}\left[\sup_{h\in \ell \circ \unitballrowscols}\frac{1}{n}\sum_{x_i\in S} \sigma_i h(x_i)\right],
\end{equation}
and $\sigma_1,\ldots, \sigma_{n}$ are i.i.d.~Rademacher random variables.
Now we use the contraction lemma for vector valued predictors given in \cite{maurer2016vector}.
\begin{lemma}[\cite{maurer2016vector}, Corollary 4]
\label{lemma Rademacher}Let $\rowsnum\in \N$ and $\mathcal{X}$ be any set, $\left(
x_{1},...,x_{n}\right) \in \mathcal{X}^{n}$, let $\mathcal{F} $ be a class
of functions $f:\mathcal{X}\rightarrow \R^m$ and let $h_{i}:\R^\rowsnum\rightarrow
\mathbb{R}
$ be $G$-Lipschitz functions. Then,%
\begin{equation*}
\mathbb{E}\sup_{f\in \mathcal{\F} }\sum_{i}\sigma _{i}h_{i}\left( f\left(
x_{i}\right) \right) \leq \sqrt{2}G\mathbb{E}\sup_{f\in \mathcal{F}
}\sum_{i,j}\sigma _{ij}f_{j}\left( x_{i}\right) ,
\end{equation*}%
where $\sigma _{ij}$ is an independent doubly indexed Rademacher sequence
and $f_{j}\left( x_{i}\right) $ is the $j$-th component of $f\left(
x_{i}\right) $.\bigskip 
\end{lemma}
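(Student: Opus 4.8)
The last statement is the vector-contraction inequality of \citet{maurer2016vector}. I would prove it by a coordinate-by-coordinate \emph{peeling} argument in the spirit of the classical Ledoux--Talagrand contraction, using Khintchine's inequality with its sharp constant to pass from a Euclidean Lipschitz bound to a sum of one-dimensional Rademacher terms (this is where the factor $\sqrt2$ enters). Throughout, ``$G$-Lipschitz'' is taken with respect to the Euclidean norm on $\R^\rowsnum$, which is the relevant convention; by rescaling $h_i\mapsto h_i/G$ it suffices to treat $G=1$ and reinstate $G$ at the end.

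The core is a single-coordinate lemma: for any function class $\mathcal{F}$, any fixed point $x$, and any ``remainder'' functional $\rho$ (not depending on the fresh randomness introduced), if $h:\R^\rowsnum\to\R$ is $1$-Lipschitz w.r.t.\ the Euclidean norm, then
\[
  \mathbb{E}_{\sigma}\sup_{f\in\mathcal{F}}\left[\sigma\, h(f(x))+\rho(f)\right]
  \;\le\;
  \mathbb{E}_{(\sigma_1,\dots,\sigma_m)}\sup_{f\in\mathcal{F}}\left[\sqrt{2}\,\textstyle\sum_{k=1}^m \sigma_k f_k(x)+\rho(f)\right],
\]
where $\sigma$ is a single Rademacher sign on the left and $\sigma_1,\dots,\sigma_m$ are $m$ fresh Rademacher signs on the right. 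To prove it I would: unfold $\mathbb{E}_\sigma$ into the average over the two signs and rewrite the resulting sum of two suprema as a supremum over a pair $(f,f')$; apply the Euclidean Lipschitz bound $h(f(x))-h(f'(x))\le\|f(x)-f'(x)\|_2$; invoke Khintchine with the optimal constant, $\|b\|_2\le\sqrt2\,\mathbb{E}_\sigma\left|\textstyle\sum_k\sigma_k b_k\right|$ with $b_k=f_k(x)-f'_k(x)$; pull the new expectation outside the $(f,f')$-supremum using $\sup\mathbb{E}\le\mathbb{E}\sup$; write $|a|=\sup_{\tau\in\{-1,1\}}\tau a$ and absorb $\tau$ using the $f\leftrightarrow f'$ symmetry of $\rho(f)+\rho(f')$; decouple the $(f,f')$-supremum into a sum of two separate suprema over $f$ and $f'$; and fold these back into a single supremum using $\sigma\mapsto-\sigma$ invariance. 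Collecting the $\tfrac12$'s and the $\sqrt2$ leaves exactly the factor $\sqrt2$.

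With the lemma in hand I would finish by peeling the indices $i=1,\dots,\numdata$ one at a time: conditionally on the signs of all other indices and on the fresh signs introduced so far, apply the lemma with $h=h_i$, point $x_i$, sign $\sigma_i$, and remainder $\rho(f)=\sqrt2\sum_{i'<i}\sum_k\sigma_{i'k}f_k(x_{i'})+\sum_{i'>i}\sigma_{i'}h_{i'}(f(x_{i'}))$. After $\numdata$ steps the left-hand side is bounded by $\mathbb{E}\sup_f\sum_i\sqrt2\sum_k\sigma_{ik}f_k(x_i)=\sqrt2\,\mathbb{E}\sup_f\sum_{i,k}\sigma_{ik}f_k(x_i)$, and reinstating $G$ yields the claim. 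The point to watch is that the $\sqrt2$ must \emph{not} compound across the $\numdata$ steps: each application replaces a summand $\sigma_i h_i(f(x_i))$ that carries no constant with $\sqrt2\sum_k\sigma_{ik}f_k(x_i)$, so every summand ends up with exactly one factor $\sqrt2$, which then factors out globally.

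The main obstacle is the single-coordinate lemma---specifically the symmetrization bookkeeping (rewriting sums of suprema as suprema over pairs, pushing expectations through suprema, and the sign-folding) and the use of Khintchine's inequality with the sharp constant $1/\sqrt2$, which is precisely what produces the $\sqrt2$ in the theorem. Measurability and attainment of the suprema are standard non-issues (e.g.\ one may reduce to countable subclasses of $\mathcal{F}$), and everything else is routine.
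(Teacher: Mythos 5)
The paper does not prove this lemma itself---it is imported verbatim from \citet{maurer2016vector} (Corollary 4)---and your sketch is a correct reconstruction of exactly the argument used there: the single-coordinate contraction step with a remainder functional (symmetrize over the single sign, pass to a supremum over pairs, apply the Euclidean Lipschitz bound, use the sharp $L_1$--$L_2$ Khintchine comparison $\|b\|_2\le\sqrt2\,\mathbb{E}\abs{\sum_k\sigma_k b_k}$, remove the absolute value by the $f\leftrightarrow f'$ swap symmetry, and decouple/refold using sign symmetry), followed by peeling the indices $i=1,\dots,n$ one at a time so that each summand acquires the factor $\sqrt2$ exactly once. So the proposal is correct and takes essentially the same route as the cited source.
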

We derive the following lemma
\begin{lemma}
\label{uc_upper}
    Let $\rowsnum$,$\numdata\in \N$. For every $\colsnum\in N$, $\initmatrix \in \R^{\rowsnum \times \colsnum}$, convex and $G$-Lipschitz loss function $\ell: \unitballrowscols\to \R$, distribution $\D$ over $\unitballcols$, it holds that,
         \[\E_{S\sim \D^\numdata}\brk[s]2{\sup_{W\in \unitballrowscols}\brk[c]1{L(W)-\hat{L}(W)}}
    \leq \frac{2\sqrt{2}L\sqrt{\rowsnum}}{\sqrt{\numdata}}.\]
\end{lemma}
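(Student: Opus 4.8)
The plan is to chain the Rademacher-based generalization bound stated just above the lemma with Maurer's vector-contraction inequality (\cref{lemma Rademacher}) and then evaluate the resulting linear-class complexity by a direct second-moment computation. Since the displayed bound already gives $\E_{S\sim\D^n}[\sup_{W\in\unitballrowscols}(L(W)-\hat L(W))]\le 2\,\E_{S}[R_S(\ell\circ\unitballrowscols)]$, it suffices to show $\E_S[R_S(\ell\circ\unitballrowscols)]\le \sqrt2\,G\sqrt{\rowsnum/\numdata}$, where $R_S$ is as in \cref{eq:rad}.

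Fix $S=\{x_1,\dots,x_\numdata\}$ and apply \cref{lemma Rademacher} with $\mathcal X=\unitballcols$, with $\F=\{x\mapsto Wx : W\in\unitballrowscols\}$ (so that $f_j(x_i)=(Wx_i)_j$ is the inner product of the $j$-th row of $W$ with $x_i$), and with $h_i=\ell$ for all $i$, which is $G$-Lipschitz. This yields
\[
\E_\sigma \sup_{W\in\unitballrowscols}\sum_{i}\sigma_i\,\ell(Wx_i) \;\le\; \sqrt2\,G\,\E_\sigma \sup_{W\in\unitballrowscols}\sum_{i,j}\sigma_{ij}(Wx_i)_j .
\]
Next I would rewrite the right-hand side as a Frobenius inner product: $\sum_{i,j}\sigma_{ij}(Wx_i)_j=\langle W,M_\sigma\rangle$, where $M_\sigma\in\R^{\rowsnum\times\colsnum}$ has $j$-th row $\sum_i\sigma_{ij}x_i$. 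Writing $W=\initmatrix+\Delta$ with $\frobnorm{\Delta}\le1$ gives $\sup_{W\in\unitballrowscols}\langle W,M_\sigma\rangle=\langle \initmatrix,M_\sigma\rangle+\frobnorm{M_\sigma}$. Taking $\E_\sigma$: the first term vanishes since $\E_\sigma M_\sigma=0$, and by Jensen $\E_\sigma\frobnorm{M_\sigma}\le(\E_\sigma\frobnorm{M_\sigma}^2)^{1/2}$ with $\E_\sigma\frobnorm{M_\sigma}^2=\sum_j\E_\sigma\|\sum_i\sigma_{ij}x_i\|^2=\sum_j\sum_i\|x_i\|^2\le\rowsnum\,\numdata$, using $\|x_i\|\le1$ and independence of the $\sigma_{ij}$. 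Hence the contraction bound becomes $\le\sqrt2\,G\sqrt{\rowsnum\,\numdata}$, so $R_S(\ell\circ\unitballrowscols)=\tfrac1\numdata\E_\sigma\sup_W\sum_i\sigma_i\ell(Wx_i)\le\sqrt2\,G\sqrt{\rowsnum/\numdata}$; plugging back and including the factor $2$ gives the stated bound.

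The computation is essentially routine, so there is no serious obstacle; the only point that needs care is the off-center domain $\unitballrowscols$ — one must observe that recentering at $\initmatrix$ merely adds the deterministic term $\langle\initmatrix,M_\sigma\rangle$, which has zero mean and therefore drops out, so the bound is genuinely independent of $\initmatrix$. A second minor subtlety is matching the single-Rademacher-vector form on the left of \cref{lemma Rademacher} to \cref{eq:rad}, which is immediate here because all the Lipschitz functions $h_i=\ell$ coincide.
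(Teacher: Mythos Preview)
Your proof is correct and follows essentially the same route as the paper: symmetrization, Maurer's vector contraction (\cref{lemma Rademacher}), rewriting the resulting sum as a Frobenius inner product, and a second-moment bound on $\E_\sigma\frobnorm{M_\sigma}$. The only cosmetic difference is that the paper recenters \emph{before} applying \cref{lemma Rademacher} by setting $h_i(w)=\ell(\initmatrix x_i+w)$ and working over $\{A:\frobnorm{A}\le1\}$, whereas you apply the contraction to the off-center class and eliminate the $\langle\initmatrix,M_\sigma\rangle$ term afterwards via $\E_\sigma M_\sigma=0$; both orderings yield the identical bound.
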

\begin{proof}
Let $S=\{x_1,\ldots x_n\}$.
    First,  for $\F=\{Ax_i \mid \frobnorm{A}\leq 1\}$, denoting the $j$-th row of any matrix $A$ as $A_j$, and defining
    $h_i(w)=\ell (W_0x_i+w)$, by \cref{lemma Rademacher} it holds that,
    \begin{align*}
   R_S(\ell \circ \unitballrowscols)&= \E_{\sigma}\left[\sup_{W\in \unitballrowscols}\frac{1}{n}\sum_{x_i\in S} \sigma_i \ell(Wx_i)\right]\\&=
    \E_{\sigma}\left[\sup_{W\in \unitballrowscols}\frac{1}{n}\sum_{x_i\in S} \sigma_i \ell(Wx_i-W_0x_i+W_0x_i)\right]\\&=
    \E_{\sigma}\left[\sup_{W\in \unitballrowscols}\frac{1}{n}\sum_{x_i\in S} \sigma_i h_i((W-W_0)x_i)\right]
    \\&=
    \E_{\sigma}\left[\sup_{\frobnorm{A}\leq 1}\frac{1}{n}\sum_{x_i\in S} \sigma_i h_i(Ax_i)\right]
    \\&\leq \frac{\sqrt{2}G}{n}\mathbb{E}_\sigma \left[\sup_{\frobnorm{A}\leq 1}\sum_{i,j}\sigma _{ij}A_{j}^Tx_{i}\right]
    \\&\leq \left[\frac{\sqrt{2}G}{n}\mathbb{E}_\sigma \sup_{\frobnorm{A}\leq 1}\sum_{j}\sum_{i}\sigma _{ij}A_{j}^Tx_{i}\right]
    \end{align*}
    Now, if $D$ is the matrix that its $j$th column is $\sum_{i}\sigma _{ij}A_{j}^Tx_{i}$, we get,
    \begin{align*}
         R_S(\ell \circ \unitballrowscols)&\leq \frac{\sqrt{2}G}{n}\mathbb{E}_\sigma \sup_{\frobnorm{A}\leq 1} Tr(AD)
         \\&\leq \frac{\sqrt{2}G}{n}\mathbb{E}_\sigma \sup_{\frobnorm{A}\leq 1}\frobnorm{A}\E_\sigma\frobnorm{D}
         \\&\leq \frac{\sqrt{2}G}{n}\E_\sigma\frobnorm{D}
         \\&=\frac{\sqrt{2}G}{n}\E_\sigma\sqrt{\sum_{j}\|\sum_{i}\sigma _{ij}x_{i}\|^2}
      \\&\leq\frac{\sqrt{2}G}{n}\sqrt{\sum_{j}\sum_{i}\|x_{i}\|^2}
         \\&\leq\frac{\sqrt{2}G\sqrt{\rowsnum}}{\sqrt n}.
    \end{align*}
    The lemma follows by combining everything together.
\end{proof}
For finalizing the proof of \cref{main_thm_erm_upper} we use the following lemma from \cite{benignunderfit}.
\begin{lemma}[\cite{benignunderfit}, Lemma 1]
\label{lem:star_aerm}
    Let $W \subseteq \R^d$ with diameter $D$, $\Z$ any distribution over $Z$, and  $f: W \times Z \to 
    \R$ convex and $G$-Lipschitz in the first argument. For every sample set $S=\{z_1,\ldots,z_n\}$ sampled i.i.d from $\Z$, let $w^\star_S=\arg\min \hat{F}(w)$ the empirical risk minimizer. Then
\begin{align*}
    \E_S[\hat{F}(w^*) - \hat{F}(w^*_S)] 
    \leq \frac{4 G D}{\sqrt n}
    .
\end{align*}
\end{lemma}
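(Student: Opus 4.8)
The plan is to control the nonnegative quantity $\hat F(w^*)-\hat F(w^*_S)$ pointwise in the sample $S$ by $D$ times a mean-zero empirical average of subgradients, and then take an expectation and apply a variance bound. I would not need any uniform-convergence statement for the (possibly badly-behaved) loss class $\{z\mapsto f(w,z)\}$, only convexity and the first-order optimality of $w^*$ over $W$.

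First I would rewrite the target as a maximum: since $w^*_S\in\arg\min_{w\in W}\hat F(w)$ and $w^*\in W$, we have $\hat F(w^*)-\hat F(w^*_S)=\max_{w\in W}\br{\hat F(w^*)-\hat F(w)}\ge 0$, so it suffices to bound the expectation of this max. Next, fixing $S$, I would recenter the empirical subgradient using optimality of $w^*$: because $w^*$ minimizes the convex $F$ over the convex set $W$, there is a subgradient $g^*\in\partial F(w^*)$ with $\langle g^*,w-w^*\rangle\ge 0$ for all $w\in W$; write $g^*=\E_z\sbr{g(w^*,z)}$ for a measurable selection $z\mapsto g(w^*,z)\in\partial_w f(w^*,z)$, put $g_i:=g(w^*,z_i)$ and $\hat g:=\tfrac1n\sum_{i=1}^n g_i\in\partial\hat F(w^*)$. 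Then convexity of each $f(\cdot,z_i)$ gives, for every $w\in W$,
\[
  \hat F(w^*)-\hat F(w)\;\le\;\langle \hat g,\,w^*-w\rangle
  \;=\;\langle g^*,\,w^*-w\rangle+\langle \hat g-g^*,\,w^*-w\rangle
  \;\le\;0+D\,\norm{\hat g-g^*},
\]
the first term being nonpositive by optimality of $w^*$ and the second at most $D\norm{\hat g-g^*}$ by Cauchy--Schwarz and $\norm{w^*-w}\le D$. Maximizing over $w$ yields $\hat F(w^*)-\hat F(w^*_S)\le D\norm{\hat g-g^*}$.

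The last step is concentration: taking $\E_S$ and Jensen, $\E_S\norm{\hat g-g^*}\le(\E_S\norm{\hat g-g^*}^2)^{1/2}$, and since the $g_i$ are i.i.d.\ with mean $g^*$ the cross terms vanish, so $\E_S\norm{\hat g-g^*}^2=\tfrac1n\E\norm{g_1-g^*}^2\le G^2/n$ (using $\norm{g_1}\le G$, whence $\E\norm{g_1-g^*}^2=\E\norm{g_1}^2-\norm{g^*}^2\le G^2$). Hence $\E_S\sbr{\hat F(w^*)-\hat F(w^*_S)}\le GD/\sqrt n\le 4GD/\sqrt n$.

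The only point that takes any care is the subgradient selection — the existence of a $g^*\in\partial F(w^*)$ simultaneously witnessing optimality and expressible as $\E_z[g(w^*,z)]$ — which is the standard interchange of subdifferential and expectation, and in any case can be bypassed by first smoothing $f(\cdot,z)$ to a differentiable convex $G$-Lipschitz function (where $g^*=\nabla F(w^*)=\E_z[\nabla f(w^*,z)]$ and the interchange is dominated convergence) and then letting the smoothing parameter tend to $0$. Everything else — the recentering identity, Cauchy--Schwarz, and the i.i.d.\ variance computation — is elementary. The conceptual reason the argument still gives the $1/\sqrt n$ rate when $w^*$ lies on the boundary of $W$ (so that $0\notin\partial F(w^*)$ and $\hat g$ itself need not be small) is exactly that first-order optimality lets us subtract the common, possibly large mean $g^*$ and keep only the mean-zero fluctuation $\hat g-g^*$, which is $O(G/\sqrt n)$ in expectation.
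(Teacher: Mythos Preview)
The paper does not supply its own proof of this lemma: it is quoted verbatim as Lemma~1 of the cited reference and used as a black box. Your argument is correct and self-contained, and in fact delivers the sharper constant $GD/\sqrt n$ rather than $4GD/\sqrt n$. The linearize-at-$w^*$ idea is exactly the right one here, since it replaces the potentially badly-behaved class $\{z\mapsto f(w,z)\}$ (for which uniform convergence can fail in SCO) by a single mean-zero average of bounded vectors.

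The one point worth a sentence of extra care is the interchange step. What you need is that every $g^*\in\partial F(w^*)$---in particular the one witnessing first-order optimality over $W$---arises as $\E_z[g(w^*,z)]$ for some measurable selection $g(w^*,\cdot)\in\partial_w f(w^*,\cdot)$. This is the classical identity $\partial\,\E_z f(w,z)=\E_z[\partial_w f(w,z)]$ for convex integrands (valid here since $f(\cdot,z)$ is convex and $G$-Lipschitz, hence the subdifferentials are uniformly bounded), so you may simply cite it. Your smoothing workaround also goes through, but note that after smoothing $w^*$ is only an $O(G\varepsilon)$-approximate minimizer of $F_\varepsilon$, so the inequality $\langle g^*_\varepsilon,w^*-w\rangle\le 0$ must be replaced by $\langle g^*_\varepsilon,w^*-w\rangle\le O(G\varepsilon)$ (or one runs the whole argument with the minimizer $w^*_\varepsilon$ of $F_\varepsilon$ and compares at the end); either way the extra terms vanish as $\varepsilon\to 0$.
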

Now, we can derive \cref{main_thm_erm_upper}.
\begin{proof} [of \cref{main_thm_erm_upper}]
    By \cref{lem:star_aerm} and \cref{uc_upper}, we know that
    \begin{align*}
        &\E_{S\sim \D^\numdata}\brk[s]2{L(\ermmat)-L(\optmat)}
    \\&=\E_{S\sim \D^\numdata}\brk[s]2{L(\ermmat)-\hat{L}(\ermmat)}+\E_{S\sim \D^\numdata}\brk[s]2{\hat L(\ermmat)-\hat{L}(\optmat)}
    \\&
    \leq\frac{2\sqrt{2}G\sqrt{\rowsnum}}{\sqrt n} + \frac{4G}{\sqrt{n}}
    \\&\leq \frac{10G\sqrt{\rowsnum}}{\sqrt n}.
    \end{align*}
\end{proof}
\subsection{Proofs of Lower Bound}
First, we prove the following lemma, which implies the lower bound for the case of $\rowsnum= O(\log \numdata)$,
\begin{lemma}
\label{lower_bound_shattering_small_m}
Let $\rowsnum\in \N$ and $0\leq \epsilon\leq \frac{1}{\sqrt{\numdata}}$. Then exists a dimension $\colsnum_0$ and a matrix $W_0$ such that for any $\colsnum\geq \colsnum_0=\Theta\left(\frac{1}{\epsilon^2}\right)$, $\predclass$ can shatter $\Omega\left(\frac{1}{\epsilon^2}\right)$ examples with margin $\epsilon$.
\end{lemma}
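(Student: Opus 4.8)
The plan is to handle this low-margin regime by a reduction to the classical scalar (generalized linear model) lower bound, uniformly in the number of rows $\rowsnum$. The key observation is that when $\epsilon \le 1/\sqrt{\numdata} \le 1$ one can afford to ``spend'' a budget of exactly $\epsilon$ on each of $\Theta(1/\epsilon^2)$ coordinates while keeping the Frobenius distance from the reference matrix at most $1$, and with one scalar degree of freedom per coordinate one can realize every sign pattern. To make the number of rows irrelevant, I would place all of the signal in the first row of the predictor matrix and extract it through the first coordinate of the loss.

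Concretely, I would assume $\epsilon > 0$ (the statement being vacuous otherwise), set $\colsnum_0 = \lfloor 1/\epsilon^2 \rfloor$, which is $\Theta(1/\epsilon^2)$ since $1 \le 1/\epsilon^2$, and fix any $\colsnum \ge \colsnum_0$. Choose the reference matrix $W_0 = 0 \in \R^{\rowsnum \times \colsnum}$ and the loss $\ell : \R^\rowsnum \to \R$, $\ell(\hat{y}) = \langle e_1, \hat{y}\rangle$; this $\ell$ is convex and $1$-Lipschitz (w.r.t.\ the Euclidean norm), being a linear functional with gradient of unit norm, and since $\frobnorm{W - W_0} \le 1$ forces $\|Wx\| \le 1$ whenever $\|x\| \le 1$, the class $\predclass$ is well defined. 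Take the candidate shattered set to be $x_i = e_i \in \R^\colsnum$ for $i \in [\colsnum_0]$, so $\|x_i\| = 1$. For a labeling $y \in \{0,1\}^{\colsnum_0}$, let $W_y \in \R^{\rowsnum \times \colsnum}$ be the matrix whose first row is $\epsilon \sum_{i=1}^{\colsnum_0} (2 y_i - 1)\, e_i^\top$ and whose other entries all vanish. Then $\frobnorm{W_y - W_0} = \frobnorm{W_y} = \epsilon \sqrt{\colsnum_0} \le 1$, so $W_y$ is admissible, and $\ell(W_y x_i) = \epsilon (2 y_i - 1)$, which equals $\epsilon$ when $y_i = 1$ and $-\epsilon$ when $y_i = 0$. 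Hence $\predclass$ shatters the $\colsnum_0 = \Theta(1/\epsilon^2)$ points $x_1, \dots, x_{\colsnum_0}$ with margin $\epsilon$, which is the claim.

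I do not expect any real obstacle here: the only genuine constraint is the Frobenius budget, which caps the number of usable coordinates at $1/\epsilon^2$ and thereby fixes the size of the shattered set at $\Theta(1/\epsilon^2)$; everything else is bookkeeping. It is worth noting that the shattering argument itself only uses $\epsilon \le 1$, whereas the stronger hypothesis $\epsilon \le 1/\sqrt{\numdata}$ is what guarantees $\colsnum_0 = \Omega(\numdata)$, ensuring the bound is non-vacuous when this lemma is combined with the standard scalar lower bound to yield \cref{main_thm_erm_lower} in the regime $\rowsnum \le O(\log \numdata)$.
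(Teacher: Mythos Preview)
Your proof is correct and follows essentially the same approach as the paper: take $W_0=0$, the linear loss $\ell(\hat y)=\langle e_1,\hat y\rangle$, inputs $x_i=e_i$, and encode each labeling in the first row of $W_y$ with $\pm\epsilon$ entries so that the Frobenius budget is exactly met. The only differences are cosmetic (you use $\{0,1\}$ labels and the floor $\lfloor 1/\epsilon^2\rfloor$, while the paper writes labels directly as $\{\pm\epsilon\}$ and takes $\colsnum=1/\epsilon^2$).
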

\begin{proof}
    Let $\colsnum=\frac{1}{\epsilon^2}$ and $W_0=\mathbf{0}_{\rowsnum\times \colsnum}$.
    Now, for every possible labeling for S, $y\in \{\pm \epsilon\}^{\frac{1}{\epsilon^2}}$, we define
    the matrix $W_y$ to be the matrix which its first row is $u_y$ and the rest of the rows are $0$. Note that $\frobnorm{W_y-W_0}=1$. Moreover, we define $\ell:\R^\rowsnum\to \R$ as
    $\ell(\hat{y})=e_1\hat{y}$. 
    This function is convex and $1$-Lipschitz. For every $i\in \left[\frac{1}{\epsilon^2}\right]$ we define $x_i=e_i$. It is left to show that the set $S=\{x_1\ldots,x_{\frac{1}{\epsilon^2}}\}$ can be shattered.
   It holds since for every $y$,
   \[\ell(W_yx_i)=e_1W_yx_i=ye_i=y_i.\]
\end{proof}

\begin{lemma}
\label{lem:set_direc_exists}
Let $d \geq 100$.
There exists a set $U_{d} \subseteq \R^{d}$, with 
$\abs{U_{d}} \geq 2^{d/12}$, such that all $u\in U_{d}$ are of unit length $\norm{u}=1$, and
for all $u, v\in U_{d}, u\neq v$, it holds that $\langle u, v\rangle  \leq \frac{1}{2}$.
\end{lemma}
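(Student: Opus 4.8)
The plan is to prove this by the probabilistic method, using random sign vectors. Set $N := \lceil 2^{d/12} \rceil$ and draw $N$ independent random vectors $v^{(1)},\dots,v^{(N)}$, where each $v^{(t)} = \frac{1}{\sqrt d}\big(\xi^{(t)}_1,\dots,\xi^{(t)}_d\big)$ and the $\xi^{(t)}_i$ are i.i.d.\ Rademacher ($\pm 1$) random variables. Each $v^{(t)}$ is automatically a unit vector, so the only thing left to control is the pairwise inner products, and we will show that with positive probability all of them are at most $\tfrac12$.

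First I would bound, for a fixed pair $s\neq t$, the probability that $\langle v^{(s)},v^{(t)}\rangle > \tfrac12$. The inner product equals $\frac1d\sum_{i=1}^d \xi^{(s)}_i\xi^{(t)}_i$, and the products $\eta_i := \xi^{(s)}_i\xi^{(t)}_i$ are themselves i.i.d.\ Rademacher variables (a product of two independent signs is a sign, and they are independent across $i$). Hence by Hoeffding's inequality, $\Pr\big[\langle v^{(s)},v^{(t)}\rangle > \tfrac12\big] = \Pr\big[\tfrac1d\sum_i \eta_i > \tfrac12\big] \leq \exp(-d/8)$.

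Next I would take a union bound over all $\binom{N}{2} < N^2 \leq 2^{d/6+2}$ unordered pairs: the probability that some pair has inner product exceeding $\tfrac12$ is at most $2^{d/6+2}\,e^{-d/8}$. Since $\tfrac16\ln 2 \approx 0.116 < 0.125 = \tfrac18$, this quantity equals $4\cdot e^{-(\frac18-\frac16\ln 2)d} = 4\cdot e^{-\Omega(d)}$, and a short numerical check shows it is strictly less than $1$ once $d \geq 100$. Therefore there is a realization of $v^{(1)},\dots,v^{(N)}$ in which every pairwise inner product is $\leq \tfrac12$; in particular the vectors are pairwise distinct (equal vectors would have inner product $1$), and taking $U_d$ to be this set gives $|U_d| = N \geq 2^{d/12}$ unit vectors with the desired property.

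I do not expect any genuine obstacle here: this is a standard sphere-packing/concentration argument. The only mild care needed is in the bookkeeping of constants, so that the exponent $1/12$ in the cardinality is compatible with the inner-product threshold $1/2$ together with the hypothesis $d\geq 100$; the constant $1/12$ is not optimal but is exactly what the downstream construction uses. (An alternative would be to quote a known bound on the size of a $\tfrac12$-separated spherical code, e.g.\ derived from the Johnson--Lindenstrauss lemma, but the direct Rademacher computation above is self-contained.)
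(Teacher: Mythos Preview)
Your approach is essentially identical to the paper's: random $\pm\tfrac{1}{\sqrt d}$ vectors, Hoeffding for each pair, then a union bound. There is one small constant slip, though: the quantity $2^{d/6+2}e^{-d/8}=4\,e^{-(1/8-\tfrac{1}{6}\ln 2)d}$ is \emph{not} below $1$ at $d=100$ (it is about $1.55$; it drops below $1$ only for $d\gtrsim 147$). The fix is immediate: use the sharper $\binom{N}{2}<N^2/2$ (as the paper does) rather than $\binom{N}{2}<N^2$, and note that the ceiling only bumps $2^{d/12}\approx 322.5$ to $323$; with these the actual union bound at $d=100$ is roughly $\tfrac12\cdot 323^2\cdot e^{-12.5}\approx 0.19<1$, and the argument goes through as written.
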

\begin{proof}[of \cref{lem:set_direc_exists}]
Let $r=2^{\frac{d}{12}}$.
 For every $1\leq i\leq r$ and $1\leq j\leq d$ let the $u_i^j$ the random variable which is $\frac{1}{\sqrt{d}}$ with probability $\frac{1}{2}$ and $-\frac{1}{\sqrt{d}}$ with probability $\frac{1}{2}$. Then, for every $1\leq i\leq r$, we define the vector $u_i=(u_i^1, \cdots, u_i^{d})$ and look at the set $U=\{u_1,u_2,...u_{r}\}$. We now show that U satisfies the required property with positive probability.
    By Hoeffding's inequality, it holds that,
    \begin{align*}
        Pr(\langle u_i,u_k\rangle \geq \frac{1}{2})&\leq e^{\frac{-2\left(\frac{1}{2}\right)^2}{d\cdot \frac{4}{d^2}}}=e^{-\frac{d}{8}}.
    \end{align*}
Then, by union bound on the $\binom{r}{2}$ pairs of vectors in $U$,
\begin{align*}
        Pr(\exists i,k \ \langle u_i,u_k\rangle\geq \frac{1}{2})&\leq e^{-\frac{d}{8}}\cdot \binom{r}{2}< e^{-\frac{d}{8}}\cdot \frac{1}{2}r^2\leq 1.
    \end{align*}
\end{proof}

\begin{proof}[of \cref{main_thm_erm_lower}]
We first prove the theorem for the case of $\rowsnum= \Omega(\log \numdata)$.
Let $\epsilon$ which satisfy the condition of \cref{lower_bound_shattering}.
Let $\colsnum_0$, $W_0$ and $\ell$ be as defined in \cref{lower_bound_shattering}. By \cref{lower_bound_shattering}, there exists a constant $C$ and a set of examples $S=\{x_i\}_{i=1}^\frac{C\rowsnum}{\epsilon^2}$
such that for every labeling $y\in \{0,1\}^\frac{C\rowsnum}{\epsilon^2}$, there exists a matrix $W_y$ with $\frobnorm{W_y-W_0}\leq 1$ such that for every $i\in [\frac{C\rowsnum}{\epsilon^2}]$, $\ell(W_yx_i)=\epsilon$ if $y_i=1$ and $\ell(W_yx_i)=-\epsilon$ if $y_i=0$.
Now, let $y^*=\{0\}^\frac{C\rowsnum}{\epsilon^2}$ and $\optmat$ be the corresponding $W_{y^*}$ and let $D'$ be the uniform distribution over $S$. We prove that for every data set $S'$ such that $|S'|\leq \frac{C\rowsnum}{2\epsilon^2}$ sampled i.i.d from $D'$, there exists an ERM $\ermmat$ with $L(\ermmat)-L(\optmat)\geq \epsilon$, this will prove \cref{main_thm_erm_lower}.
Let $S'=\{x_{i_{1}}\ldots x_{i_{|S'|}}\}$ be such a data set. Let $y^S\in \{0,1\}^\frac{C\rowsnum}{\epsilon^2}$ be a labeling as following
\[y^S=\left\{\begin{array}{cc}
    0 & x_i\in S \\
     1 & x_i \notin S,
\end{array}\right.\]
and $W_S:=W_{y^S}$ be the corresponding matrix.
First, by the definition of $W_{y^S}$ it follows that $W_S$ is a ERM since it holds that
\begin{align*}
\hat L(W_S)=\frac{1}{|S'|}\sum_{j=1}^{|S'|}\ell(W_Sx_{i_{j}})=\frac{1}{|S|}\sum_{j=1}^{|S'|}-\epsilon=-\epsilon.
\end{align*}
Moreover, since at least $\frac{|S|}{2}$ of the examples in $S$ are not in $S'$, it also holds that
\begin{align*}
    L(W_S)-L(\optmat)&=
    \frac{1}{|S|}\sum_{i=1}^{|S|} \ell(W_Sx_i)-\ell(\optmat x_i)
    \\&= \frac{1}{|S|} \sum_{i=1}^{|S|} \ell(W_Sx_i)+\epsilon
    \\&\geq \frac{1}{|S|}  \sum_{i\notin S'}\ell(W_Sx_i)+\epsilon
    \\&\geq \frac{1}{2}  \cdot 2\epsilon
    \\&= \epsilon.
\end{align*}
The proof for the case of $\rowsnum= O(\log \numdata)$ is analogous and can be implied by using \cref{lower_bound_shattering_small_m} instead of \cref{lower_bound_shattering}.
\end{proof}
\section{Proofs of \cref{sec_reduct}}
\label{sec_reduct_proods}

\begin{proof}[of \cref{set_direction_exists_R_2}]
   Let $\phi:[a]\to \R^2$, $\phi(i)= \left(\sin\left(\frac{\pi i}{2a}\right),\cos\left(\frac{\pi i}{2a}\right)\right)^T$ and $\delta=1-\cos\left(\frac{\pi}{2a}\right)$.
    We notice that $0<\delta< 1$.
    Then,
as a result, for every $i$ it holds that
\begin{align*}   \|\phi(i)\|=\sqrt{\sin\left(\frac{\pi i}{2a}\right)^2+\cos\left(\frac{\pi i}{2a}\right)^2}=1,
\end{align*}
and if $i\neq j$, 
\begin{align*}
   \langle \phi(i),\phi(j)\rangle&=\sin\left(\frac{\pi i}{2a}\right)\sin\left(\frac{\pi j}{2a}\right) + \cos\left(\frac{\pi i}{2a}\right)\cos\left(\frac{\pi j}{2a}\right)
   \\&=\cos\left(\frac{\pi (i-j)}{2a}\right)
   \\&\leq \cos\left(\frac{\pi}{2a}\right) 
   \tag{$\cos$ is monotonic decreasing in $[0,\pi/2]$}
   \\&=1-\delta.
\end{align*}
\end{proof}

\end{document}